\documentclass[]{interact}
\usepackage{epstopdf}
\usepackage[caption=false]{subfig}
\usepackage{float}
\usepackage{mathrsfs}
\usepackage{textcomp}

\usepackage[numbers,sort&compress]{natbib}
\bibpunct[, ]{[}{]}{,}{n}{,}{,}
\renewcommand\bibfont{\fontsize{10}{12}\selectfont}
\makeatletter
\def\NAT@def@citea{\def\@citea{\NAT@separator}}
\makeatother

\newtheorem{theorem}{Theorem}[section]
\newtheorem{Example}{Example}[section]
\newtheorem{Lemma}{Lemma}[section]

\begin{document}


\title{Integro-differential equations in angular stabilization of drone's motion by distributed feedback control}



\author{
\name{Alexander Domoshnitsky${}^2$, Oleg Kupervasser${}^{1,2}$ \thanks{CONTACT Oleg Kupervasser Email: olegkup@yahoo.com} and Anatoly Polonsky${}^2$}
\affil{${}^1$  Sami Shamoon College of Engineering, South District, Be'er Sheva, Israel } \affil{ ${}^2$ Ariel University, Ari'el, ISRAEL}   }


\maketitle

\begin{abstract}
In this paper, we propose angular stabilization of drone's
motion using distributed feedback control in the form of an
integral operator. It should be stressed that the memory of this
integral operator could be unbounded. It is intuitively clear that
large length of the observation time open new possibilities to
construct better control based on previous states of the control
object. Unbounded memory in control requires the creation of a
certain approach different from standard ones to the study of
integro-differential equations. One of the goals of this article
is to propose a certain universal approach that allows us to study
the stability of integro-differential equations in the case of
unbounded memory in the integral operator specifying the feedback
control in stabilization. The approach we propose allows us to
reduce the study of integro-differential equations to the analysis
of systems of ordinary differential equations. In general, such
systems can consist of an infinite number of equations. In relation to
the so-called linear approximation in the problem of angle
stabilization manages to limit itself to relatively simple exponential kernels
in the integral control and arrive at a system with a finite
number of equations. The examples explain that more complex
kernels, for example, linear combinations of the exponential kernels, can enhance the stabilization capabilities. We obtain new
unexpectable results on the exponential stability of
integro-differential equations. Then we apply them to
stabilization of drone's flight.

AMS classification 45M10, 34K20, 45D05
\end{abstract}

\begin{keywords}
exponential stability; integral control; angular stabilization; drone's motion; automatic control; distributed feedback control
\end{keywords}

\section{Introduction}

Integro-differential equations are used as mathematical models in
wide class of technological, biological and medical processes.
Evolutionary integral equations and their applications are the
object of the book \cite{Pruss}. Theory of heat conduction with
finite wave speeds are proposed in \cite{Gurtin}. Phase-field
equations with memory was studied in \cite{Novick,Rotstein}.
Viscoelastic properties of polymers with the use of
integro-differential equations as models were studied in
\cite{Ferry}. \ Mathematical problems in viscoelasticity with
integro-differential equations were formulated and studied in
\cite{Fabrizio,Renardy}. Boundary value problems in linear
viscoelasticity were considered in \cite{Golden}. Stability of
integro-differential equations in viscoelasticity was studied in \cite%
{Drozd1}. Important applications of integro-differential equations
we can
see in epidemiological models \cite%
{Aguiar2008,Chen,Guan,Driessche,Huang,Reich,Vanessa,Xu2017}.
Various applications present one of the main reasons in
development of the theory of these equations. Let us emphasize
here that integro-differential equations represent a complicated
object for theoretical analysis and, most importantly, for
numerical solution methods since integral terms accumulate
calculation errors. The larger memory length, the larger these
errors will be. The idea of somehow reducing the analysis of
integro-differential equations to the analysis of certain averaged
ordinary differential equations that can be approximately solved
with known numerical methods has existed for a long time. The
averaging method for solving integro-differential equations was
proposed in the book \cite{Filatov}. The difficulty, and in many
cases the impossibility, of estimating the error when moving on to
considering the averaged equation poses a serious problem for
researchers. This difficulty leads to creation of new stability
methods and estimates of solutions to integro-differential
equations. Note several books on the theory of
integro-differential equations
\cite{Burton,Corduneanu,cord2,Gripenberg}. Their generalizations
in the form of equations with Volterra operators was proposed in \cite%
{AMR,AP}. Stability of integro-differential equations was studied in \cite%
{BDK,drozd2,drozd,Gil,Hara}. Versions of the Floquet theory for
integro-differential equations are constructed in \cite%
{AgarwalLup,Becker,Belbas}.

We try to propose a corresponding approach to the study of
integro-differential equations. Under natural assumptions,
integro-differential equations can be reduced to systems of
ordinary differential equations. In a general case these systems
can consist of infinite number of differential equations. In
modern real-world applications the kernels are simple enough; for example, finite linear combinations of exponential kernels, and
we come to finite-dimensional systems. Existing methods for their
analysis and numerical solving can be then used. 

Our main motivation lies in the task of angular stabilization of drone's motion. Drone
flight control tasks have recently become key to mathematical modelling in industrial
and military aviation technology. The aerodynamic aspects of the flight of aircraft and
similarly-shaped flying machines have been developed for quite some time and are included
in the basic textbooks (see, for example, the book \cite{Bodner1} from 1961).
With the development of technology, unmanned aerial vehicles (UAVs) have become a
very important area of research around the world in recent decades (see, for example, the
state of the art in the papers \cite{Song,Yang,Yajing,Abdillah}, where models of quadrotors are considered
and many important references are given. Fixed-wing drones were also developed \cite{Triputra}).
Modeling flight of such drone motion in the form of system of delay differential equations
is described in the paper \cite{Avasker1}.
Note our projects in this area, where
drone's flight is studied (these projects are made using the grant \cite%
{DomKup2017} from China private company Hangzhou Avisi Electronics
Co., Ltd and grants NOFAR \cite{DomKup2020} and KAMIN
\cite{DomKup2018} from The Israel Innovation Authority, Ministry
of Economics and Trade of State of Israel). Modelling flight
of such drone motion in the form of system of delay differential
equations can be found in the paper \cite{Avasker1}. In our paper, we use the distributed feedback
control in the form of integral term to stabilize the flight of
drones. We will focus our attention on a problem of angular
stabilization for drone motion in vertical plane. It is
intuitively clear that a long time observation which can be
described by the feedback control in the form of integral term
with long memory allows researchers to construct better control
than the use of previous state at one point only. This opens new
possibilities for stabilization. We obtain some unexpectable
results in this direction on the basis of our approach to
stability studies of integro-differential equations. 

Why does control in the form of integral term look naturally? The answer on this question is very simple.
First of all, noise can imply measurements. Engineers to prevent this noise prefer to use average value of many measurements.
This is a way to integral average value which is presented very widely in Internet (see, integral control). But in construction
of the control of drone's flight, engineers use measurements of different devices situating in different place. Distance between them 
can be kilometers. Delay in coming signals from different devices and probability of errors in measurements of each of them lead to the use of
complicated kernels in the integral control. There were only a few papers to note (see \cite{Yajing} and references there) connected to drone's flight.
The main goal of our paper is to fill this gap and to propose convenient tests of stabilization of angular drone's motion by integral delay feedback control.     

The idea of our approach to studying stability is that an auxiliary system of
a larger dimension than the original system is built. Solution
vector of the original system of integro-differential equations
and corresponding components of the solution vector for this built
system of ordinary differential equations coincide. This new
system, despite its larger dimension, turns out to be much simpler
for stability analysis on the basis of existing methods. We
explain how to build this auxiliary system. The fact that memory
can be unbounded, unlike known approaches to studying the
stability of integro-differential equations, that require a small
memory length, is not an obstacle in our analysis, but on the
contrary helps in stability studies. We also emphasize that we
will not need to store all previous states of flight of our drone
in the memory of the control device since the extended system
consists of ordinary differential equations and it
\textquotedblleft absorbs\textquotedblright\ all previous states
of drone's flight in an implicit form.

Here, it is important for us to stress the following fact.
Theoretical research around the design of automatic observers and
controllers using so-called sampled data has become an important
topic in the last 20 years (see, for example, its description in
\cite{Fridman1}). Note that the idea of sampled data was used in
the book \cite{Fridman2}. It was demonstrated in the paper
\cite{Cacace} that appearing of time-varying gains for a specific
class of observers can be efficient. Usually, the switch to
sampled data is due to the fact that the control only uses the
values of the solution at some discrete points. That is, the range
of possible $x(t)$ values is narrowed to several discrete states
$x(t_{1}),\ldots ,x(t_{m})$. When we use an integral operator
$\int \limits_{0}^{t}k(t,s)x(s)ds$\ in the feedback control, we
seem to be going in the opposite direction, extending possible
values of $x(s)$ to the continuum on which the control effect is
built. It seems paradoxical that, in spite of this, but our
reduction of integro-differential systems to systems of ordinary
differential equations actually \textquotedblleft
narrows\textquotedblright\ the data. In our future research, we
will look at an explicit combination of integral feedback control
and sampled data. This can lead to completely new tests of
stability and methods of stabilization through distributed
feedback control.

Our paper is built as follows. In Section 2 we construct a model
of drone's flight. As well as we know, it is the first time in the
the mathematical literature that the feedback control in the form
of integral terms with exponential kernels is used. In Section 3, we describe our method
for analysis of integro-differential equations. In Section 4, we
obtain zones of stability, which depend on parameters of the
control for the study of integro-differential equations. In
Section 5, we demonstrate how our results can work in
stabilization of drone's flight.

\bigskip

\section{Construction of model of drone's flight}

\bigskip

Let's consider a description of devices for determining the
angular position of a drone during longitudinal movement in a
vertical plane perpendicular to a ground sea level surface,
derivation of equations of angular motion, stabilization of the
angular position of a drone using the theory described in the
previous parts, as well as finding control parameters that ensure
this stabilization. For these purposes we use works
\cite{Avasker1, Bodner1, Abahre1, Madgwick1, Madgwick2, Kotaro1,
Indicator1}.

The angular position of the drone during longitudinal movement in
the vertical plane of a perpendicular to ground sea level surface
is characterized by two angles: $\hat{a}(\bar{t})$ - angle of
attack, i.e. angle between longitudinal axis of a drone and
projection of drone velocity on the symmetry plane of the drone,
$\vartheta (\bar{t})$ - pitch angle,
i.e., angle between longitudinal drone axis and horizontal plane, here $\bar{%
t}$ describes time.

Let us look at the instruments measuring these angles. The pitch angle $%
\vartheta (\bar{t})$ is measured by integrating the readings of
MEMS gyroscopes, which measure the angular velocity of rotation.
Accumulating errors at a constant speed of movement are corrected
by measuring deviations from the vertical gravitational field,
determined by the gravitational acceleration g. The direction and
magnitude of g relative to the drone is measured using MEMS
accelerometers \cite{Abahre1, Madgwick1, Madgwick2}. The angle of
attack $\hat{a}(\bar{t})$ is measured using indicated airspeed
anemometers \cite{Kotaro1, Indicator1}. These two devices make
measurements very quickly and have almost zero time delay.

The flight of the drone is characterized by the following steady
state parameters: $\alpha _{0}$ - angle of attack, i.e., angle
between longitudinal axis of a drone and projection of drone
velocity on the symmetry plane of the drone; $\vartheta _{0}$ -
pitch angle, i.e., angle between longitudinal drone axis and
horizontal plane; $V_{0}$ - flight velocity tangent to trajectory
(with respect of air); $H_{0}$ - height above mean sea level of a
drone flight; ${(U_{y})}_{0}$ - wind velocities along the axis
$y$.

Let us consider perturbation of these steady state parameters: $V(\bar{t}%
)=V_{0}+\Delta V(\bar{t}),$ $\hat{\alpha}(\bar{t})=\alpha
_{0}+\Delta \alpha
(\bar{t}),$ $\hat{\vartheta}(\bar{t})=\vartheta _{0}+\Delta \vartheta (\bar{t%
}),$ $U_{y}(\bar{t})={(U_{y})}_{0}+\Delta U_{y}(\bar{t}),$ $H(\bar{t}%
)=H_{0}+\Delta H(\bar{t}).$ Their normalized values are usually
taken as

$$
\left\{
\begin{array}{c}
v(t)=\Delta V(\tau _{a}t)/V_{0}; \\
h(t)=\Delta H(\tau _{a}t)/V_{0}\tau _{a}; \\
\alpha (t)=\Delta \alpha (\tau _{a}t); \\
\vartheta (t)=\Delta \vartheta (\tau _{a}t); \\
\nu _{y}(t)=\Delta U_{y}(\tau _{a}t)/V_{0};%
\end{array}%
\right. .\eqno(2.1)
$$
\begin{figure}[H]
\centering
\includegraphics[width=10cm]{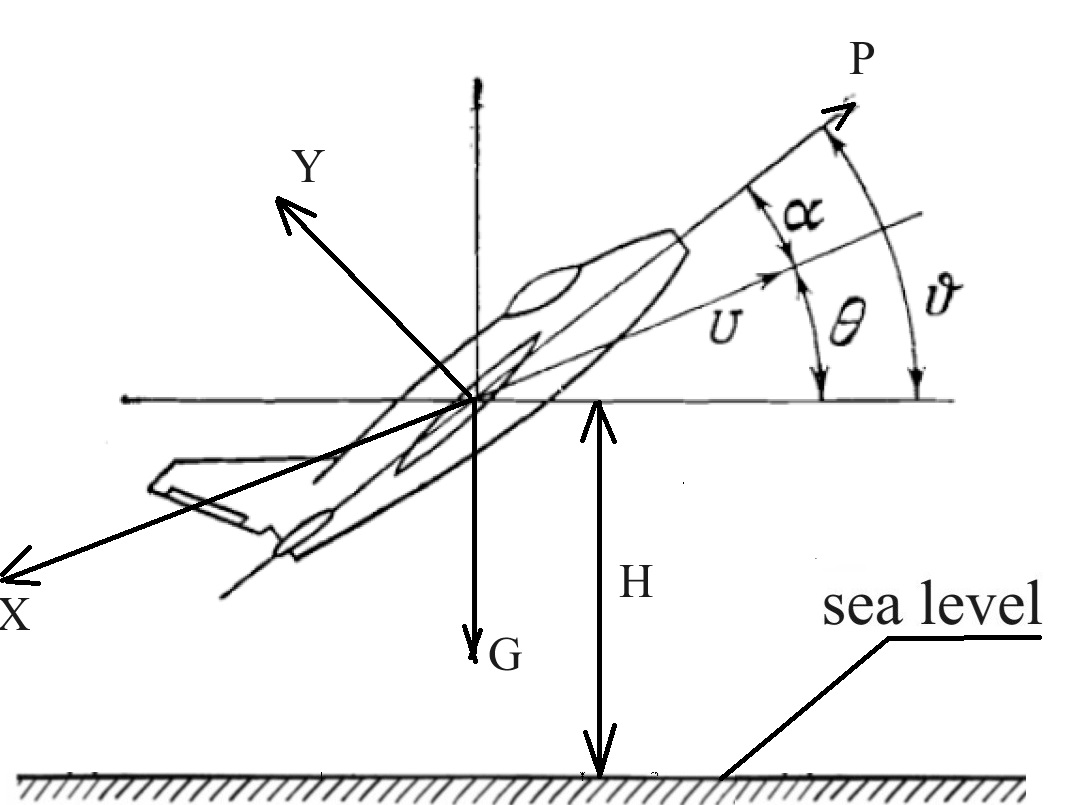}
\caption{Parameters of drone's longitudinal motion, Y - carrying force orthogonal to flight velocity, X - resistance force opposite to V, G - gravitation force, P - tractive force directed along longitudinal drone axis, V - flight velocity tangent to trajectory (with respect to air), H - height above mean sea level of a drone flight, $\vartheta$ -pitch angle, i.e., angle between longitudinal drone axis and horizontal plane, $\theta$ - tilting of velocity about horizontal plane, $\alpha$ - angle of attack, i.e., angle between longitudinal. Full description of these parameters can be found in the paper \cite{Avasker1}}
\end{figure}




where ${t}={\frac{\bar{t}}{{\tau _{a}}}}$; $\tau _{a}=m/(\rho _{0}V_{0}S)$; $%
m$ - drone's mass; $S$ - area of wings; $\rho (H)$ - air density
at a flight height $H$; $\rho (H)=\rho
(0)(T_{H}(H)/T(0))^{1/(\gamma -1)}$;$\rho _{0}=\rho
(0)(T_{H}(H_{0})/T(0))^{1/(\gamma -1)}$; $T_{H}(H)$ - temperature
at a flight height; $T_{H}(H)=T(0)-\beta H$; $T(0)$, $\rho (0)$ -
temperature and air density at mean sea level; $\gamma $- adiabatic constant;%
$R$ -gas constant; $\beta $-temperature gradient over height.

Our basic model consists of four equations for normalized
perturbations of
four drone motion parameters with respect to four steady state values \cite%
{Avasker1, Bodner1}:

$$
\left\{
\begin{array}{c}
v^{\prime }(t)+n_{11}v(t)+n_{12}\alpha (t)+n_{13}\vartheta
(t)+n_{14}h(t)=n_{p}\delta _{p}(t)+f_{1}(t) \\
\alpha ^{\prime }(t)-n_{21}v(t)+n_{22}\alpha (t)-\vartheta
^{\prime
}(t)-n_{23}\vartheta (t)+n_{24}h(t)=f_{2}(t) \\
n_{31}v(t)+n_{0}\alpha ^{\prime }(t)+n_{32}\alpha (t)+\vartheta
^{\prime \prime }(t)+n_{33}\vartheta ^{\prime
}(t)+n_{34}h(t)=-n_{B}\delta
_{B}(t)+f_{3}(t) \\
h^{\prime }(t)-n_{41}v(t)+n_{42}\alpha (t)-n_{42}\vartheta (t)=\nu _{y}(t)%
\end{array}%
\right. .\eqno(2.2)
$$

In (2.2) we have controlling signals: $\delta _{p}(t)$-position of
drone central control knob, $\delta _{B}(t)$ - deviation of drone
control elevator; random external parameters:
$f_{1}(t),f_{2}(t),f_{3}(t)$ - random forces and random moments of
forces, $\nu _{y}(t)$ - random fluctuation of
vertical component of wind velocities along axes $y$. The random functions $f_{1}(t),f_{2}(t),f_{3}(t),\nu _{y}(t) $ are bounded functions with bounded derivatives with respect to time. The coefficients $%
n_{ij} (i,j=1,2,3,4),n_{B},n_{p},n_{0}$ are constant parameters
obtaining by linearization of basic nonlinear equations, they can
depend on parameters of steady state motion. The values of these
constant parameters can be found in the Table 1 in
\cite{Avasker1}.

Partial simplified case of the system (2.2) can be obtained under
the following assumptions \cite{Bodner1}: We suppose that dependence on a flight height is negligible: $n_{14}=0$; $%
n_{24}=0$; $n_{34}=0$. Velocity changes are much slower than angle
changes so $v=0$. Horizontal motion so tilting of velocity about
horizontal plane is equal to
zero \cite{Bodner1}: $\theta_{0}=0 \Rightarrow n_{23}={\frac{{2 mg}}{{%
\varrho_{0} S V_{0}^{2}}}} sin(\theta_{0}) =0$, where $\theta_{0}=%
\vartheta_{0}-\alpha_{0}$ - steady state value of tilting velocity
about horizontal plane. $g$ - gravitational acceleration of a free
fall.

This leads us to the following system:%
\[
\left\{
\begin{array}{c}
\alpha^{\prime }(t)+n_{22}\alpha(t)-\vartheta ^{\prime}(t)=f_{2}(t) \\
n_{0}\alpha^{\prime
}(t)+n_{32}\alpha(t)+\vartheta^{\prime\prime}(t)+n_{33}\vartheta^{%
\prime}(t)=-n_{B}\delta_{B}(t)+f_{3}(t)%
\end{array}%
\right .\Leftrightarrow
\]

\[
\left\{
\begin{array}{c}
(\alpha(t)-\vartheta(t))^{\prime }+n_{22}\alpha(t)=f_{2}(t) \\
n_{0}\alpha^{\prime }(t)+n_{32}\alpha(t)+(\alpha^{\prime
}(t)+n_{22}\alpha(t)-f_{2}(t))^{\prime}+n_{33}(\alpha^{\prime
}(t)+n_{22}\alpha(t)-f_{2}(t)) \\
=-n_{B}\delta _{B}(t)+f_{3}(t)%
\end{array}%
\right .\Leftrightarrow
\]

$$
\left\{
\begin{array}{c}
\theta^{\prime }(t)=n_{22}\alpha(t)-f_{2}(t) \\
\alpha^{\prime\prime}(t)+\gamma_{1}\alpha
^{\prime}(t)+\gamma_{2}\alpha(t)=-n_{B}\delta
_{B}(t)+f_{3}(t)+f_{2}^{\prime}(t)+n_{33}f_{2}(t)%
\end{array}%
\right. ,\eqno(2.3)
$$
where $\theta(t)= \vartheta(t)-\alpha(t)$,
$\gamma_{1}=n_{0}+n_{33}+n_{22}$,
$\gamma_{2}=n_{32}+n_{33}n_{22}$.

Constants $n_{0},n_{33}$ are defined from aerodynamic damping moment: $%
n_{0}=-{\frac{\mu }{\tau _{a}}}({\frac{\partial m_{2z}}{\partial \dot{\alpha}%
}})_{0}=-{\frac{\mu }{\tau _{a}}}k^{\prime }k$;
$n_{33}=-{\frac{\mu }{\tau
_{a}}}({\frac{\partial m_{2z}}{\partial \dot{\vartheta}}})_{0}=-{\frac{\mu }{%
\tau _{a}}}{\frac{L_{1}}{M(V_{0},H_{0})}}k$. $()_{0}$- This symbol
means
that a value inside of the brackets is taken for the steady values $V_{0}$, $%
\theta _{0}$, $\alpha _{0}$, $\vartheta _{0}$, $H_{0}$, ${(U_{y})}_{0}$, $%
\delta _{p}=0$, $\delta _{B}=0$, $f_{1}(t)=0$, $f_{2}(t)=0$,
$f_{3}(t)=0$.

Here $M_{z}$ - total moment of aerodynamical forces with respect
of transversal axis,

$m_{z}$- coefficient of moment defined by the formula:

$m_{z}(\alpha ,M(V,H),\dot{\alpha},\dot{\vartheta},\delta
_{B})\doteq{\frac{M_{z}}{b_{a}S\rho (H)V^{2}/2}}$.

Coefficient of moment $m_{z}$ consists of two components $m_{1z}$,
$m_{2z}$:

$m_{z}(\alpha ,M(V,H),\dot{\alpha},\dot{\vartheta}%
,\delta _{B})=m_{1z}(\alpha ,M(V,H),\delta _{B})+m_{2z}(M(V,H),\dot{\alpha},%
\dot{\vartheta}),$ where

$m_{2z}(M(V,H),\dot{\alpha},%
\dot{\vartheta})=k({L_1 \over M(V,H)}\dot{\vartheta}+k^{\prime
}\dot{\alpha})$,
 $L_{1}$-distance
from tail unit to center of mass, $k$, $k^{\prime }$- constants,

$\mu ={%
\frac{b_{a}m}{2r_{z}^{2}\rho _{0}S}}$, $b_{a}$- length of wing chord, $%
r_{z}^{2}={\frac{J_{z}}{m}}$, $r_{z}$-inertial radius, $J_{z}$-
inertial moment of drone with respect of axis $z$, $M(V,H)\doteq
{\frac{V}{a(H)}}>0$
- Mach number, $a(H)=\sqrt{\gamma RT_{H}(H)}$ - sound velocity. Constants $%
n_{32}$, $n_{22}$ are defined from aerodynamic damping moment: $n_{32}=-\mu (%
{\frac{\partial m_{1z}}{\partial \alpha }})_{0}$, $n_{22}={\frac{1}{2}}(({%
\frac{\partial c_{y}}{\partial \alpha }})_{0}+(c_{x})_{0})$.
Denote Y- carrying force orthogonal to flight velocity, X-
resistance force opposite to V, $c_{x}(\alpha
,M(V,H))={\frac{X}{S\rho (H)V^{2}/2}}$, $c_{y}(\alpha
,M(V,H))={\frac{Y}{S\rho (H)V^{2}/2}}$.

For stable jet, we have: $L_{1}>0$; $k<0$; $k^{\prime }>0$; $\mu
>0$ so constants $n_{0}$, $n_{33}$ are more than zero: $n_{0}>0$;
$n_{33}>0$,

$(c_{x})_{0}>0$; $({\frac{\partial c_{y}}{\partial \alpha }})_{0}>0$ so $%
n_{22}={\frac{1}{2}}(({\frac{\partial c_{y}}{\partial \alpha }}%
)_{0}+(c_{x})_{0})>0$. As a result, we obtain $\gamma
_{1}=n_{0}+n_{33}+n_{22}>0.$

Jet aircraft has a small damping moment, so
$n_{33}n_{22}<<|n_{32}|.$As a result, the first term $n_{32}$ in
equation $\gamma _{2}=n_{32}+n_{33}n_{22}$ is leading term and
defines its sign. $\mu >0$ so the sign of $n_{32}$ depends on the
sign of $({\frac{\partial m_{1z}}{\partial \alpha }})_{0}$.

For modern drones for up to sound speeds $M<1.5,$ we have
$({\frac{\partial
m_{1z}}{\partial \alpha }})_{0}<0$. However, for $M\geq 1.5,$ we have $({%
\frac{\partial m_{1z}}{\partial \alpha }})_{0}\geq 0$. So we obtain $%
n_{32}>0 $ for $M<1.5$ and $n_{32}\leq 0$ for $M\geq 1.5$. Because
$n_{32}$
is leading term for $\gamma _{2}$ then $\gamma _{2}>0$ for $M<1.5$ and $%
\gamma _{2}\leq 0$ for $M\geq 1.5$.

\bigskip

\section{Method for studies of integro-differential equations}

\bigskip In this section, we propose an approach to studies of
integro-differential equations. Let us consider the equation
$$
x^{(n)}(t)=f(t,x(t),\ldots
,x^{(n-1)}(t),\int_{0}^{t}k^{0}(t,s)x(s)ds,\ldots
,\int_{0}^{t}k^{n-1}(t,s)x^{(n-1)}(s)ds),\eqno(3.1)
$$%
under the standard assumptions that $f$ is a Caratheodori function
and $k^{m}(t,s)$ is a square integrable function. Then in the
Hilbert space $~L_{2}~$ it can be represented as
$$
k^{m}(t,s)=\sum\limits_{i,j=1}^{\infty }c_{ij}^{m}\alpha _{i}(t)\alpha
_{j}(s),\text{ }m=0,\ldots ,n-1,\eqno(3.2)
$$%
where $~\alpha _{i}(t)~(i=1,2,\cdots )~$ is an orthogonal basis in $~L_{2},~$
and $~\sum\limits_{i,j=1}^{\infty }\left( c_{ij}^{m}\right) <\infty ~$ (see
\cite{Kolmogorov},Chapter 7 or \cite{Shilo}). This kernel can be rewritten
as
$$
k^{m}(t,s)=\sum\limits_{i,j=1}^{\infty }\left( c_{ij}^{m}\right) \beta
_{i}^{m}(t)e^{-\gamma _{ij}^{m}t}e^{\gamma _{ij}^{m}s}\beta _{j}^{\ast m}(s),%
\eqno(3.3)
$$%
where $\beta _{i}^{m}(t)=\alpha _{i}(t)e^{\gamma _{ij}^{m}t},$ $\beta
_{j}^{\ast m}(s)=\alpha _{j}(s)e^{-\gamma _{ij}^{m}s}.$ The function $%
e^{-\gamma _{ij}^{m}t}e^{\gamma _{ij}s}=e^{-\gamma _{ij}^{m}(t-s)}$ is the
Cauchy function of the equation
$$
\frac{dv_{ij}^{m}}{dt}(t)+\gamma _{ij}^{m}v_{ij}^{m}(t)=z(t),\eqno(3.4)
$$%
thus, its general solution can represented as
$$
v_{ij}^{m}(t)=\int_{0}^{t}e^{-\gamma
_{ij}^{m}(t-s)}z(s)ds+v_{ij}(0)e^{-\gamma _{ij}^{m}t}.\eqno(3.5)
$$%
Consider now the system
$$
x^{(n)}(t)=f(t,x(t),\ldots ,x^{(n-1)}(t),\sum\limits_{i,j=1}^{\infty }\left(
c_{ij}^{0}\right) \beta _{i}^{0}(t)v_{ij}^{0}(t),\ldots
,\sum\limits_{i,j=1}^{\infty }\left( c_{ij}^{n-1}\right) \beta
_{i}^{n-1}(t)v_{ij}^{n-1}(t)),\eqno(3.6)
$$%
$$
\frac{dv_{ij}^{m}}{dt}(t)+\gamma _{ij}v_{ij}^{m}(t)=\beta _{j}^{\ast m}(t) x(t),%
\text{ }m=0,\ldots ,n-1,\text{ }i,j=1,2,3,\ldots \eqno(3.7)
$$%
where the components $v_{ij}^{m}(t)$ satisfy the initial conditions
$$
v_{ij}^{m}(0)=0,m=0,\ldots ,n-1,\text{ }i,j=1,2,3,\ldots \eqno(3.8)
$$

We obtain the following

{\bf Theorem 3.1. }{\it The solution }$x(t)${\it \ of system (3.1) and the
first component of the solution-vector }$\\$
$
(x(t),v_{11}^{0}(t),v_{12}^{0}(t),v_{21}^{0}(t),v_{13}^{0}(t),v_{22}^{0}(t),v_{31}^{0}(t),\ldots,$

$
v_{11}^{n-1}(t),v_{12}^{n-1}(t),v_{21}^{n-1}(t),v_{13}^{n-1}(t),v_{22}^{n-1}(t),v_{31}^{n-1}(t),\ldots )$
{\it \ \\of system (3.6),(3.7), satisfying the initial condition (3.8) for
the components }$v_{ij}^{m}(t),${\it \ coincide.}

{\bf Remark 2.1.} Theorem 3.1 reduces analysis of
integro-differential system (3.1) to the studies of
infinite-dimensional system (3.6),(3.7). Analysis of stability of
such systems were started, as well as we know, by K.P.Persidskii
\cite{Pers1,Pers2,Pers3}. Existence and uniqueness results were
proposed by R.Bellman \cite{Bellman}. In the book \cite{Valeev}, a
theory of infinite-dimensional systems was presented.
Corresponding \textquotedblleft smalness\textquotedblright\ of the
sum of the coefficients were assumed. Questions of representation
of solutions and existence and uniqueness results were proposed in
\cite{Lena1,Lena2}, in the case of only finite number of
components in every equation (This property was called as a
condition $V$). Positivity of solutions to boundary value problems
for integro-differential equations were studied in
\cite{DomGolInfinite}, were the condition $V$ was avoided. This
could allow us to get new tests of stability for
integro-differential systems on the basis of the theory proposed
in \cite{AMR}. It should be noted that we have infinite number of
the components in the equation (3.6) and the condition $V$ is not
fulfilled.

{\bf Remark 3}.{\bf 2}. Elementary examples of
integro-differential equations with simple kernels for which basic
operations lead to system of ordinary differential equations were
discussed in the book \cite{Burton} and the paper \cite{PSM}.

{\bf Remark 3.3.} If we limit ourselves by the finite number \ in
the sum,
i.e.,%
$$
k^{m}(t,s)=\sum\limits_{i,j=1}^{q}c_{ij}^{m}\alpha _{i}(t)\alpha _{j}(s),%
\text{ }m=0,\ldots ,n-1,\eqno(3.9)
$$%
we obtain the integro-differential system in which there are only
finite number of equations in (3.7). All classical methods for
analysis of this system can be used. Researchers have difficulties
in the use of developed numerical methods for solving
integro-differential equations. The reason of this is in the fact
that integral terms accumulate numerical errors. Our reducing
integro-differential equation to system of ordinary differential
equations gives an approach to solve this problem.

{\bf Remark 3.4.} Increasing $q$ improves our possibility in stabilization.
Actually, equation $x^{\prime \prime }(t)=0$ is unstable. Adding integral
term does not allow to achieve the exponential stability\ for the equation
$$
x^{\prime \prime }(t)+\beta _{1}\int_{0}^{t}e^{-\gamma
_{1}(t-s)}x(s)ds=0,\eqno(3.10)
$$%
since its characteristic equation (obtained through reducing by
Theorem 3.1 to system of ordinary differential equations) is
$\lambda ^{3}+\gamma _{1}\lambda ^{2}+$ $\beta _{1}=0$, and we see
that  the necessary condition of the exponential stability given
by the Hurwitz criteria is not fulfilled. If we consider the
equation
$$
x^{\prime \prime }(t)+\beta _{1}\int_{0}^{t}e^{-\gamma
_{1}(t-s)}x(s)ds+\beta _{2}\int_{0}^{t}e^{-\gamma _{2}(t-s)}x(s)ds=0,%
\eqno(3.11)
$$%
we can get the exponential stability. For example, let us take the
following parameters $\beta _{1}=50,$ $\beta _{2}=-28,$ $\gamma
_{1}=7,$ $\gamma _{2}=4.$ Theorem 3.1 reduces analysis of this
integro-differential equation to system of ordinary differential
equations with the following roots of the characteristic equation
$\lambda _{1}\approx -7.7417,$ $\lambda _{2}\approx -0.2583,$
$\lambda _{3}\approx -1,$ $\lambda _{4}\approx -2,$ i.e., we
obtain the exponential stability of equation (3.11). We have zones
of parameters in which the exponential stability exists. Let us
set, for example, $\gamma _{1}=7,$ $\gamma _{2}=4,$ then the
Hurwitz criteria leads us to the following possible zone of the
exponential stability in the plane of the coefficients $\beta
_{1}$ and $\beta _{2}:$ \ $-410.6666<$ $\beta _{2}<0,$
$-1.75<\beta _{1}<-\beta _{2}-88+11\sqrt{-3\beta _{2}+64}.$

It is intuitively clear that $q \rightarrow \infty$ in the forms
of the kernels improves essentially our possibilities in feedback
delay stabilization by integral term. The form (3.2) will be
"optimal". Although in the modern real world application only the
case of $q=1,2$ are used, the studies of general case with $q =
\infty$ will be important problem.

\section{Results on Stability Based on analysis of auxiliary system}
For the second order equation
$$
x^{''}(t) + \gamma_{1}x^{'}(t) + \gamma_{2}x(t) = 0 , \eqno(4.1)\\\\
$$%
with parameters \(\gamma_{1},\gamma_{2}\), it is well known that
the necessary and sufficient condition of exponentially stability
is:
$$
\gamma_{1} > 0,\gamma_{2} > 0.\eqno(4.2)
$$%
Let us consider equation (4.1) when condition (4.2) is not
satisfied. In this situation, 3 important cases of stabilization
by integral feedback control can be considered.\\

\subsection{Case I: \(\gamma_{1} > 0,\gamma_{2} < 0.\)}

In this case, the form of the control can be as follows:
$$
(a)\ \ \ \ \beta\int_{0}^{t}e^{- \alpha(t - s)}x(s)ds,\ \ (b)\ \ \ \ \beta\int_{0}^{t}e^{- \alpha(t - s)}x^{'}(s)ds. \eqno(4.3)
$$%
Consider the situation where the control is of the type (a). Then
we come to the following equation:
$$
x^{''}(t) + \gamma_{1}x^{'}(t) + \gamma_{2}x(t) = \beta\int_{0}^{t}e^{- \alpha(t - s)}x(s)ds.
$$%
Then using Theorem 3.1 we obtain the following system of
ordinary differential equations:

\[
\left\{
\begin{array}{c}
x^{''} + \gamma_{1}x^{'} + \gamma_{2}x = y \\\\
y^{'} = -\alpha y + \beta x  \\
\end{array}%
.\right.\eqno(4.4)
\]
The characteristic equation of system (4.4) has the following
form:
$$
 \lambda^{3} + \left( \gamma_{1} + \alpha \right)\lambda^{2} + \left( \alpha + \gamma_{2} \right)\lambda + \alpha\gamma_{2} - \beta = 0.
$$%
The Hurwitz criteria gives necessary and sufficient conditions of the
exponential stability of (4.4):

$$
\left\{
\begin{array}{c}
 \gamma_{1} + \alpha > 0 \\
\alpha\gamma_{1} + \gamma_{2} > 0 \\
\alpha\gamma_{2} - \beta > 0 \\
\left( \gamma_{1} + \alpha \right)\left( \alpha\gamma_{1} + \gamma_{2} \right) + \beta - \alpha\gamma_{2} > 0 \\
\end{array}%
.\right. \eqno(4.5)
$$%

\begin{theorem}
     For any parameters \(\gamma_{1} > 0,\gamma_{2} < 0\ \)there
     exist
be parameters \(\alpha,\beta\ \)such that system (4.4) is
exponentially stable.
\end{theorem}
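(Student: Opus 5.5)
The plan is to treat system (4.4) as a linear autonomous system of ordinary differential equations in the variables $(x,x',y)$. For such a system, exponential stability is equivalent to all roots of the characteristic polynomial having negative real parts. Everything then reduces to exhibiting $\alpha,\beta$ that satisfy the Hurwitz inequalities (4.5).

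First I would recompute the characteristic polynomial to be sure of the coefficients. Substituting $x=e^{\lambda t}$, $y=ce^{\lambda t}$ into (4.4) gives $c=\beta/(\lambda+\alpha)$. Hence the characteristic equation is
\[
(\lambda^{2}+\gamma_{1}\lambda+\gamma_{2})(\lambda+\alpha)-\beta
=\lambda^{3}+(\gamma_{1}+\alpha)\lambda^{2}+(\alpha\gamma_{1}+\gamma_{2})\lambda+\alpha\gamma_{2}-\beta=0 .
\]
The middle coefficient is $\alpha\gamma_{1}+\gamma_{2}$, which is the one appearing in (4.5). For a cubic $\lambda^{3}+a_{1}\lambda^{2}+a_{2}\lambda+a_{3}$, the Hurwitz conditions are $a_{1}>0$, $a_{2}>0$, $a_{3}>0$ and $a_{1}a_{2}>a_{3}$. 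These are exactly the four lines of (4.5).

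Next I would choose the parameters in two steps. Since $\gamma_{1}>0$ and $\gamma_{2}<0$, I fix any $\alpha>-\gamma_{2}/\gamma_{1}>0$. Then $\gamma_{1}+\alpha>0$ and $\alpha\gamma_{1}+\gamma_{2}>0$, so the first two inequalities hold. Set $P=(\gamma_{1}+\alpha)(\alpha\gamma_{1}+\gamma_{2})>0$. The remaining two inequalities then read
\[
\alpha\gamma_{2}-P<\beta<\alpha\gamma_{2}.
\]
This interval is nonempty precisely because $P>0$. So, for instance, I take $\beta=\alpha\gamma_{2}-P/2$, which is negative. With this choice $a_{3}=\alpha\gamma_{2}-\beta=P/2>0$ and $a_{1}a_{2}-a_{3}=P/2>0$, and all of (4.5) holds.

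Finally, by the Hurwitz criterion all three characteristic roots have negative real parts. Hence every solution of (4.4) satisfies $|x(t)|+|x'(t)|+|y(t)|\le Ne^{-\varepsilon t}$ for some $N,\varepsilon>0$, which is exponential stability of (4.4). By Theorem 3.1 with $y(0)=0$, this transfers to the integro-differential equation with control (4.3)(a). There is no real obstacle here. The only point needing care is the correct middle coefficient $\alpha\gamma_{1}+\gamma_{2}$ rather than $\alpha+\gamma_{2}$, since with the latter the choice of $\alpha$ would change. Optionally, I could also note explicit admissible examples, e.g.\ any $\alpha>-\gamma_{2}/\gamma_{1}$ together with $\beta$ in the open interval above.
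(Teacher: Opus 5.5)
Your proof is correct and follows the paper's route. You reduce to the Hurwitz inequalities (4.5), take $\alpha>-\gamma_{2}/\gamma_{1}$, and choose $\beta$ strictly between $\alpha\gamma_{2}-(\gamma_{1}+\alpha)(\alpha\gamma_{1}+\gamma_{2})=-\gamma_{1}\alpha^{2}-\gamma_{1}^{2}\alpha-\gamma_{1}\gamma_{2}$ and $\alpha\gamma_{2}$, which is exactly the paper's (4.6). Your factorization of the gap as $(\gamma_{1}+\alpha)(\alpha\gamma_{1}+\gamma_{2})>0$ proves the interval is nonempty, where the paper relies on the intersection points $\alpha=-\gamma_{1}$, $\alpha=-\gamma_{2}/\gamma_{1}$ and a figure. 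You are also right that the middle coefficient is $\alpha\gamma_{1}+\gamma_{2}$; the paper's displayed characteristic equation has a typo there.
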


\begin{proof}
Solving system (4.5) we obtain that the following inequalities
will be satisfied
$$
\left\{
\begin{array}{c}
\alpha > - \frac{\gamma_{2}}{\gamma_{1}} \\\\
 - \gamma_{1}\alpha^{2} - \gamma_{1}^{2}\alpha - \gamma_{1}\gamma_{2} < \beta < \gamma_{2}\alpha \\
\end{array}%
\right.\eqno(4.6)
$$%
In the coordinate system $(\alpha,\beta)$, let us find the points
of intersection of graphs of functions
$\beta=-\gamma_{1}\alpha^{2} - \gamma_{1}^{2}\alpha -
\gamma_{1}\gamma_{2}$ and $ \beta = \gamma_{2}\alpha$. The result
will be the following 2 points: $(-\gamma_1,-\gamma_1\gamma_2)$,
$(-\frac{\gamma_2}{\gamma_1},-\frac{\gamma_{2}^2}{\gamma_1})$. The
stability zone of the solution of the system (4.4) can be seen in
the following figures
\begin{figure}[H]
\centering
\includegraphics[width=10cm]{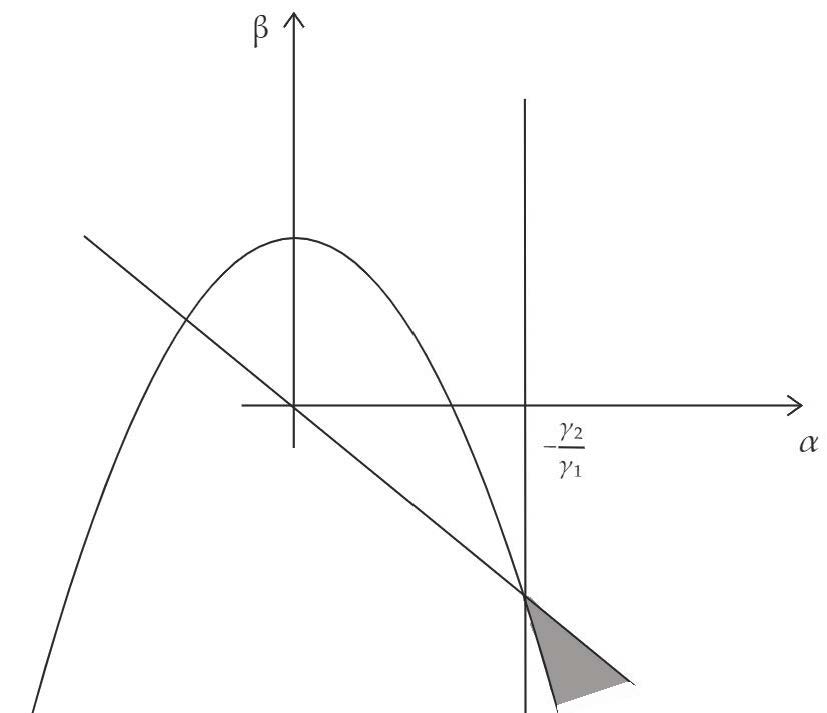}
\caption{Stability zone for system (4.4).}
\end{figure}

\end{proof}
\begin{Example}

Let \(\gamma_{2} = - 4,\ \gamma_{1} = 2\), then \(\alpha > 2\) and
let us choose \( \alpha = 5\), then \(- 62 < \beta < - 20\), and
we can set, for example, \(\beta = - 25\).

The characteristic polynomial has the form:

\[P_{3}(\lambda) = \lambda^{3} + 7\lambda^{2} + \lambda + 5.\]

Its roots are:

$$
\left\{
\begin{array}{c}
\lambda_{1} \approx - 1.73 \\
\lambda_{2} \approx - 0.02 + 0.52i. \\
\lambda_{3} \approx - 0.02 - 0.52i \\
\end{array}%
\right.\eqno(4.7) .
$$%
\end{Example}

Consider the situation where the control is of the type (b), i.e.,
we come to the following equation:
$$
x^{''}(t) + \gamma_{1}x^{'}(t) + \gamma_{2}x(t) = \beta\int_{0}^{t}e^{- \alpha(t - s)}x^{'}(s)ds.\eqno(4.8) \\
$$%
Then, using Theorem 3.1, we obtain the following system of
ordinary differential equations:

$$
\left\{
\begin{array}{c}
x^{''} + \gamma_{1}x^{'} + \gamma_{2}x = y \\\\
y^{'} = - \alpha y + \beta x^{'} \\
\end{array}%
\right.\eqno(4.9) .
$$%
The characteristic equation of the system (4.8) has the following form:

\[  \lambda^{3} + \left( \gamma_{1} + \alpha \right)\lambda^{2} + \left( \alpha\gamma_{1} + \gamma_{2} + \beta \right)\lambda + \alpha\gamma_{2} = 0.\]
The Hurwitz criteria gives necessary and sufficient conditions of the
exponential stability of (4.8):

$$
\left\{
\begin{array}{c}
\gamma_{1} + \alpha > 0 \\
\alpha\gamma_{1} + \gamma_{2} - \beta > 0 \\
\alpha\gamma_{2} > 0 \\
\left( \gamma_{1} + \alpha \right)\left( \alpha\gamma_{1} + \gamma_{2} - \beta \right) - \alpha\gamma_{2} > 0 \\
\end{array}%
\right.\eqno(4.10) .
$$%

\begin{theorem}
    If \(\alpha > 0\), \(\gamma_{1} > 0\), \(\gamma_{2} < 0\), then  system (4.8) is
unstable
\end{theorem}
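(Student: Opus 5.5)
Via Theorem 3.1, equation (4.8) with the zero initial condition for the auxiliary component $y$ is equivalent to the ordinary differential system (4.9). So it suffices to show that (4.9) has a characteristic root with nonnegative real part whenever $\alpha>0$, $\gamma_1>0$, $\gamma_2<0$, regardless of $\beta$. The characteristic polynomial is
\[
P(\lambda)=\lambda^{3}+(\gamma_{1}+\alpha)\lambda^{2}+(\alpha\gamma_{1}+\gamma_{2}+\beta)\lambda+\alpha\gamma_{2}.
\]

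The key observation is its constant term, $P(0)=\alpha\gamma_2$, which is strictly negative under the hypotheses $\alpha>0$, $\gamma_2<0$. Because $P$ is a real monic cubic, $P(\lambda)\to+\infty$ as $\lambda\to+\infty$ along the real axis. By the intermediate value theorem there is therefore a real root $\lambda_0>0$.

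Equivalently, the necessary Hurwitz condition $\alpha\gamma_2>0$ (the third inequality in (4.10)) fails. Another way to see it is that the product of the three roots equals $-\alpha\gamma_2>0$, which is impossible if all roots have negative real parts, since complex roots come in conjugate pairs with positive product.

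The positive root $\lambda_0$ gives the solution $x(t)=e^{\lambda_0 t}$, $y(t)=\frac{\beta\lambda_0}{\lambda_0+\alpha}e^{\lambda_0 t}$ of (4.9), which grows unboundedly. Hence (4.9) is unstable, not merely non-exponentially-stable.

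The only step needing care is the transfer back to (4.8), because Theorem 3.1 requires $y(0)=0$. To handle it, I would show that the instability can be realized by solutions with $y(0)=0$. Let $\Phi$ be the set of initial data $(x(0),x'(0),y(0))$ whose trajectories do not excite the unstable mode; it is the stable/center subspace, of dimension at most $2$. The initial data allowed by (4.8) form the plane $\{y(0)=0\}$. If this plane coincided with $\Phi$, then every trajectory starting in $\{y=0\}$ would stay in $\{y=0\}$. That would force $y'=\beta x'\equiv 0$ along those trajectories. For $\beta\neq0$ this fails, e.g. for initial data $(0,1,0)$. For $\beta=0$, (4.8) reduces to (4.1), which is unstable by the failure of (4.2). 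Hence some initial data $x(0),x'(0)$ of (4.8) produce a solution with an $e^{\lambda_0 t}$ component, and (4.8) is unstable.
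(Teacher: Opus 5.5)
Your proof is correct and rests on the same observation as the paper's: the constant term $\alpha\gamma_2$ of the characteristic polynomial of (4.9) is negative, so the Hurwitz condition $\alpha\gamma_2>0$ in (4.10) fails. The paper stops there, which strictly shows only that (4.9) is not exponentially stable and tacitly identifies (4.8) with (4.9); your positive real root and your check that initial data with $y(0)=0$ excite the $e^{\lambda_0 t}$ mode go further and actually justify the word ``unstable'' for (4.8) (also, the linear coefficient should be $\alpha\gamma_1+\gamma_2-\beta$, a sign slip that does not affect your argument).
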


\begin{proof}
If \(\gamma_{2} < 0\) and \(\alpha > 0\), then  \(\alpha\gamma_{2}
< 0\) and (4.10) cannot be fulfilled.
\end{proof}

\subsection{Case II: \(\gamma_{1} < 0,\gamma_{2} > 0.\)}

As in the previous case, the control can take the forms of  (4.3)
above. Consider the situation, where the control is of the type
(a).

\begin{theorem}
  System
(4.9) cannot be stable for \(\gamma_{1} < 0, \gamma_{2}
> 0\), \(\alpha > 0\).
\end{theorem}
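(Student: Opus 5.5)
The plan is to argue exactly as for the preceding theorems: use Theorem 3.1 to replace the integro-differential equation by a finite system of ordinary differential equations, then apply the Hurwitz criterion to its characteristic polynomial. The statement names system (4.9), but the surrounding text says the control is of type (a), whose system is (4.4). I would therefore treat both systems and look for one Hurwitz inequality that contradicts the sign pattern $\gamma_{1}<0$, $\gamma_{2}>0$, $\alpha>0$.

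\emph{Type (a), system (4.4).} Expanding $(\lambda^{2}+\gamma_{1}\lambda+\gamma_{2})(\lambda+\alpha)-\beta$ gives
\[
\lambda^{3}+(\gamma_{1}+\alpha)\lambda^{2}+(\alpha\gamma_{1}+\gamma_{2})\lambda+\alpha\gamma_{2}-\beta ,
\]
which matches the inequalities (4.5). For $\gamma_{1}<0$, two of these conditions pull $\alpha$ in opposite directions:
\[
\gamma_{1}+\alpha>0 \;\Longleftrightarrow\; \alpha>|\gamma_{1}|, \qquad \alpha\gamma_{1}+\gamma_{2}>0 \;\Longleftrightarrow\; \alpha<\frac{\gamma_{2}}{|\gamma_{1}|}.
\]
The contradiction I would aim for is that no $\alpha$ satisfies both. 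This holds exactly when $\gamma_{1}^{2}\ge\gamma_{2}$.

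\emph{Type (b), system (4.9).} The coefficients are $\gamma_{1}+\alpha$, $\alpha\gamma_{1}+\gamma_{2}+\beta$ and $\alpha\gamma_{2}$. Here the free gain $\beta$ enters the middle coefficient linearly. So the only obstruction available is again the leading condition $\alpha>|\gamma_{1}|$, and it yields instability only on the range $0<\alpha\le|\gamma_{1}|$.

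The main obstacle is that this contradiction is not available in general, and I do not see how to close the gap from the stated hypotheses alone. For type (a), take $\gamma_{1}=-1$, $\gamma_{2}=4$, $\alpha=2$, $\beta=7$. This gives $\lambda^{3}+\lambda^{2}+2\lambda+1$, whose Hurwitz determinant is $1\cdot 2-1=1>0$, so (4.4) is stable. For type (b), taking $\alpha>|\gamma_{1}|$ and $\beta$ large satisfies all Hurwitz conditions. Hence the proof as I would write it establishes instability under the additional hypotheses indicated above ($\gamma_{1}^{2}\ge\gamma_{2}$ for (a), $0<\alpha\le|\gamma_{1}|$ for (b)). I would state those hypotheses explicitly rather than claim the unrestricted version.
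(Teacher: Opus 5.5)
Your conclusion is correct: the statement cannot be proved as written, and your analysis is sharper than what the paper offers. The paper gives no general proof of this theorem. Its only support is the example $\gamma_1=-4$, $\gamma_2=2$, where the first two inequalities of (4.5) read $\alpha>4$ and $\alpha<1/2$. So the paper is really arguing about system (4.4), i.e.\ control of type (a). It uses exactly your obstruction, $\gamma_1+\alpha>0$ against $\alpha\gamma_1+\gamma_2>0$, in an instance with $\gamma_1^2=16\ge\gamma_2$. The sentence that follows (``not for all values $\gamma_i$ \dots'') claims only this existential fact.

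Your counterexample $\gamma_1=-1$, $\gamma_2=4$, $\alpha=2$, $\beta=7$ checks out: the coefficients are $1,2,1$ and the Hurwitz determinant is $1$. It also generalizes. When $\gamma_1^2<\gamma_2$, any $\alpha\in(|\gamma_1|,\gamma_2/|\gamma_1|)$ together with any $\beta\in\bigl(\alpha\gamma_2-(\gamma_1+\alpha)(\alpha\gamma_1+\gamma_2),\,\alpha\gamma_2\bigr)$ satisfies (4.5). Hence, for $\gamma_1<0<\gamma_2$, system (4.4) is exponentially stabilizable with $\alpha>0$ if and only if $\gamma_1^2<\gamma_2$. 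That is the correct version of the type (a) claim.

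Read literally with (4.9), the statement is refuted by the paper itself. The very next theorem asserts that (4.9) is stabilizable for every $\gamma_1<0<\gamma_2$. Its example, $\gamma_1=-2$, $\gamma_2=4$, $\alpha=5$, $\beta=-15$, yields the Hurwitz polynomial $\lambda^3+3\lambda^2+9\lambda+20$.

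One correction to your type (b) computation. From $y'=-\alpha y+\beta x'$, the middle coefficient is $\alpha\gamma_1+\gamma_2-\beta$, not $\alpha\gamma_1+\gamma_2+\beta$. You inherited a sign slip from the paper's Case I; the paper's own example above works only with $-\beta$. This changes nothing except that the stabilizing choice is $\beta$ large negative rather than large positive.

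Similarly, the paper's displayed characteristic polynomial for (4.4) has $\alpha+\gamma_2$ in the middle coefficient; your $\alpha\gamma_1+\gamma_2$ is the correct one. Stating the result under the extra hypotheses you identify ($\gamma_1^2\ge\gamma_2$ for (a), $0<\alpha\le|\gamma_1|$ for (b)) is the right move, since the unrestricted claim is false.
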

\begin{Example}
 Let \(\gamma_{1} = - 4\), \(\gamma_{2} = 2\) and then the first two
inequalities in the system of inequalities (4.5) look like this

\[\left\{ \begin{matrix}
 - 4 + \alpha > 0 \\
 - 4\alpha + 2 > 0 \\
 \vdots \\
\end{matrix} \right.\ \]
As we can see, there is no solution to the system of inequalities.
\end{Example}

We have shown that not for all values \(\gamma_{i}\) with control
of the form (4.3) we can achieve the exponential stability.
Then we will look for a different type of control.\\
Consider the situation where the control is of the type (b). Then
we have equation (4.8).

Then, using Theorem 3.1, we obtain system (4.9) of ordinary
differential equations. The characteristic equation of the system
(4.9) has the following form:

\[ \lambda^{3} + \left( \gamma_{1} + \alpha \right)\lambda^{2} + \left( \alpha\gamma_{1} + \gamma_{2} - \beta \right)\lambda + \alpha\gamma_{2} = 0.\]
The Hurwitz criteria, gives necessary and sufficient conditions of
the exponential stability of (4.9) in the form of system of
inequalities (4.10).

\begin{theorem}
    For every fixed parameters \(\gamma_{1} < 0,\gamma_{2} > 0\ \) there exist parameters \(\alpha,\beta\ \)such that system (4.9) is
exponentially stable.
\end{theorem}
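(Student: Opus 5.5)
The plan is to apply the Hurwitz criterion directly to the cubic characteristic polynomial of system (4.9), and to choose first $\alpha$ and then $\beta$ so that every inequality in (4.10) holds.

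The first step is to rederive the characteristic polynomial from (4.9), which fixes the sign convention for $\beta$. Substituting $x=c_1e^{\lambda t}$, $y=c_2e^{\lambda t}$ gives $y=(\lambda^2+\gamma_1\lambda+\gamma_2)x$ from the first equation. The second equation gives $(\lambda+\alpha)y=\beta\lambda x$. Eliminating $y$ yields
$$P_3(\lambda)=(\lambda+\alpha)(\lambda^2+\gamma_1\lambda+\gamma_2)-\beta\lambda=\lambda^3+a_2\lambda^2+a_1\lambda+a_0,$$
where
$$a_2=\gamma_1+\alpha,\qquad a_1=\alpha\gamma_1+\gamma_2-\beta,\qquad a_0=\alpha\gamma_2 .$$
This agrees with the polynomial displayed just before the theorem and with the inequalities (4.10). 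For a cubic, the Hurwitz conditions are $a_2>0$, $a_1>0$, $a_0>0$ and $a_2a_1>a_0$.

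The second step is the choice of $\alpha$. Since $\gamma_1<0$, the condition $a_2>0$ forces $\alpha>-\gamma_1=|\gamma_1|>0$, so I would fix any such $\alpha$, for instance $\alpha=1-\gamma_1$. Then $a_2>0$ holds. Because $\alpha>0$ and $\gamma_2>0$, also $a_0=\alpha\gamma_2>0$. This is exactly where Case II differs from Case I: there $\alpha\gamma_2<0$ made (4.10) impossible, as in the theorem on the instability of (4.8).

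The third step is to choose $\beta$ for the remaining two conditions, and $\beta$ enters only through $a_1$, linearly and with a negative sign. So it suffices to take $\beta$ sufficiently negative. Precisely, I would choose
$$\beta<\alpha\gamma_1+\gamma_2-\frac{\alpha\gamma_2}{\gamma_1+\alpha}.$$
Then $a_1>\alpha\gamma_2/(\gamma_1+\alpha)>0$, which gives both $a_1>0$ and, multiplying by $a_2>0$, the last inequality $a_2a_1>a_0$.

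By the Hurwitz criterion all roots of $P_3$ then have negative real parts, so (4.9) is exponentially stable. By Theorem 3.1 the same holds for equation (4.8).

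There is no real obstacle here. The only point needing care is the sign bookkeeping in $a_1$: the paper's displayed characteristic equations for (4.8) and (4.9) differ in the sign of $\beta$. I would resolve this by the direct derivation above, which gives the minus sign. With that settled, the ordering "first $\alpha$, then $\beta$" makes the argument immediate. For concreteness I would close with a numerical instance, e.g. $\gamma_1=-4$, $\gamma_2=2$, $\alpha=5$, for which any $\beta<-28$ works.
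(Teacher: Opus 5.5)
Your proposal is correct and follows essentially the same route as the paper's proof. Like the paper, you first fix $\alpha>-\gamma_1$ (so $\alpha>0$, $\alpha\gamma_2>0$ and $\alpha+\gamma_1>0$), then take $\beta<\gamma_1\alpha+\gamma_2-\frac{\alpha\gamma_2}{\alpha+\gamma_1}$, which gives the remaining two Hurwitz inequalities; your direct derivation of the characteristic polynomial, with $-\beta$ in the linear coefficient, settles the sign inconsistency between the paper's displays, and your example bound $\beta<-28$ checks out.
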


\begin{proof}
It is clear that \(\alpha\) and \(\beta\) can be chosen such that
the following inequalities to be satisfied :

\[\left\{ \begin{matrix}
\alpha > - \gamma_{1} \\
\beta < M \\
\end{matrix} \right.\ ,
\]
where $M = \frac{\gamma_{1}\alpha^{2} + \gamma_{1}^{2}\alpha +
\gamma_{1}\gamma_{2}}{\alpha + \gamma_{1}}$.

Indeed, the proof will be constructive, i. e., we will show how to
choose the values of parameters $\alpha$ and $\beta$.

1. At the first one we find conditions for the parameters
$\alpha$, then from the first inequality in the system (4.10) we
obtain $\alpha > -\gamma_1$. For the current case $\gamma_1 <0$
and $ \gamma_2 > 0$. So form these three inequalities we can
conclude that $\alpha> 0$, $\alpha \gamma_2 > 0$, and $\alpha
+\gamma_1>0$. As a result we get  ${{\alpha \gamma_2} \over
{\alpha +\gamma_1}}>0$.

2. After we have chosen the value of parameter $\alpha$ we need to
find the value of parameter $\beta$. Then from the remaining
inequalities in (4.10) we obtain:

$\beta < \gamma_1 \alpha +\gamma_2$

$\beta < \gamma_1 \alpha +\gamma_2- {{\alpha \gamma_2} \over
{\alpha +\gamma_1}}$

${{\alpha \gamma_2} \over {\alpha +\gamma_1}}>0$ then the last two
inequalities will be stratified if

$\beta < \gamma_1 \alpha +\gamma_2- {{\alpha \gamma_2} \over
{\alpha +\gamma_1}}= \frac{\gamma_{1}\alpha^{2} +
\gamma_{1}^{2}\alpha + \gamma_{1}\gamma_{2}}{\alpha +
\gamma_{1}}=M $

\end{proof}

\begin{Example}

Let \({\ \gamma_{1} = - 2,\ \ \gamma}_{2} = 4\), then \(\alpha >
2\). We can choose \(\alpha = 5\ \) and then \(\beta < -
12\frac{2}{3}\), we can set,  for example, \(\beta = - 15\).

The characteristic polynomial has the form:

\[P_{3}(\lambda) = \lambda^{3} + 3\lambda^{2} + 9\lambda + 20.\]

Its roots are:
\begin{equation*}
    \lambda_{1} \approx - 2.54, \lambda_{2} \approx - 0.22 + 2.79i, \lambda_{3} \approx - 0.22 - 2.79i.
\end{equation*}

\end{Example}

\subsection{Case III: \(\gamma_{1} < 0,\gamma_{2} < 0.\)}

In this case, the following control signals of the following types
are possible

\[(a)\ \ \ \beta_{1}\int_{0}^{t}e^{- \alpha_{1}(t - s)}x(s)ds + \beta_{2}\int_{0}^{t}e^{- \alpha_{2}(t - s)}x(s)ds,\]

\[(b)\ \ \ \beta_{1}\int_{0}^{t}e^{- \alpha_{1}(t - s)}x(s)ds + \beta_{2}\int_{0}^{t}e^{- \alpha_{2}(t - s)}x^{'}(s)ds,\]

\[(c)\ \ \ \beta_{1}\int_{0}^{t}e^{- \alpha_{1}(t - s)}x^{'}(s)ds + \beta_{2}\int_{0}^{t}e^{- \alpha_{2}(t - s)}x^{'}(s)ds,\]

In the case (a), we have the following equation:

\[x^{''}(t) + \gamma_{1}x^{'}(t) + \gamma_{2}x(t) = \beta_{1}\int_{0}^{t}e^{- \alpha_{1}(t - s)}x(s)ds + \beta_{2}\int_{0}^{t}e^{- \alpha_{2}(t - s)}x(s)ds.\]

Then using Theorem 3.1, we obtain the following system of
ordinary differential equations:

\[\left\{ \begin{matrix}
\begin{matrix}
x^{''} + \gamma_{1}x^{'} + \gamma_{2}x = y_{1} + y_{2} \\
\  \\
{y_{1}}^{'} = - \alpha_{1}y_{1} + \beta_{1}x \\
\  \\
{y_{2}}^{'} = - \alpha_{2}y_{2} + \beta_{2}x \\
\end{matrix}. \\
\  \\

\end{matrix} \right.\eqno(4.11) \]

The characteristic equation of system (4.11) has the following form:
\begin{gather*}
   \lambda^{4} + \left( \gamma_{1} + \alpha_{1} + \alpha_{2} \right)\lambda^{3} + \left( \alpha_{1}\gamma_{1} + \alpha_{2}\gamma_{1} + \alpha_{1}\alpha_{2} + \gamma_{2} \right)\lambda^{2} + \left( \alpha_{1}\alpha_{2}\gamma_{1} + \alpha_{1}\gamma_{2} + \alpha_{2}\gamma_{2} - \beta_{2} - \beta_{1} \right)\lambda +\\
   + \alpha_{1}\alpha_{2}\gamma_{2} - \alpha_{2}\beta_{1} -
   \alpha_{1}\beta_{2}=0.
\end{gather*}

The Hurwitz criteria, gives necessary and sufficient conditions of the
exponential stability of (4.11):

\[\left\{ \begin{matrix}
\gamma_{1} + \alpha_{1} + \alpha_{2} > 0 \\
\alpha_{1}\gamma_{1} + \alpha_{2}\gamma_{1} + \alpha_{1}\alpha_{2} + \gamma_{2} > 0 \\
\alpha_{1}\alpha_{2}\gamma_{1} + \alpha_{1}\gamma_{2} + \alpha_{2}\gamma_{2} - \beta_{2} - \beta_{1} > 0 \\
\alpha_{1}\alpha_{2}\gamma_{2} - \alpha_{2}\beta_{1} - \alpha_{1}\beta_{2} > 0 \\
\left( \gamma_{1} + \alpha_{1} + \alpha_{2} \right)\left( \alpha_{1}\gamma_{1} + \alpha_{2}\gamma_{1} + \alpha_{1}\alpha_{2} + \gamma_{2} \right)\left( \alpha_{1}\alpha_{2}\gamma_{1} + \alpha_{1}\gamma_{2} + \alpha_{2}\gamma_{2} - \beta_{2} - \beta_{1} \right) - \\
\left( \gamma_{1} + \alpha_{1} + \alpha_{2} \right)^{2}\left( \alpha_{1}\alpha_{2}\gamma_{2} - \alpha_{2}\beta_{1} - \alpha_{1}\beta_{2} \right) - \left( \alpha_{1}\alpha_{2}\gamma_{1} + \alpha_{1}\gamma_{2} + \alpha_{2}\gamma_{2} - \beta_{2} - \beta_{1} \right)^{2} > 0 \\
\end{matrix}. \right.\eqno(4.12) \]
To solve the system of inequalities (4.12), we will use the following idea. The first two inequalities in (4.12) are used to determine the values of $\alpha_1$ and $\alpha_2$. Then, these values are used to solve the remaining inequalities in (4.12) to obtain the values of $\beta_1$ and $\beta_1$.
Finding parameters \(\alpha_{1},\alpha_{2},\beta_{1},\beta_{2}\ \) will
be explained by 2 lemmas.

\begin{Lemma}
The following system of inequalities gives us some partial
solutions \(\alpha_{1}\) and \(\alpha_{2}\) of the first two
inequalities in system (4.12)
\[\left\{ \begin{matrix}
\alpha_{2} >  - \gamma_{1} \\
\  \\
\alpha_{1} > \frac{- \gamma_{1}\alpha_{2} - \gamma_{2}}{\gamma_{1} + \alpha_{2}} \\
\ \\
\alpha_{1} \neq \alpha_{2}\\
\end{matrix} .\right.\ \]
\end{Lemma}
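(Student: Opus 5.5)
We verify directly that every pair $(\alpha_1,\alpha_2)$ satisfying the displayed system also satisfies the first two inequalities of (4.12), namely
\[
\gamma_{1}+\alpha_{1}+\alpha_{2}>0,\qquad \alpha_{1}\gamma_{1}+\alpha_{2}\gamma_{1}+\alpha_{1}\alpha_{2}+\gamma_{2}>0 .
\]
Throughout, $\gamma_1<0$ and $\gamma_2<0$, as in Case III. The condition $\alpha_1\neq\alpha_2$ is not needed for these two inequalities. It is kept only so that the two exponential kernels in control (a) are genuinely distinct, so that (4.11) does not degenerate to the single-kernel situation of Remark 3.4.

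\textbf{Step 1: the second inequality.} Write the second inequality as linear in $\alpha_1$:
\[
\alpha_1(\gamma_1+\alpha_2)+\gamma_1\alpha_2+\gamma_2>0 .
\]
The first condition of the lemma, $\alpha_2>-\gamma_1$, says exactly that the coefficient $\gamma_1+\alpha_2$ is positive. Dividing by it shows the second inequality is equivalent to
\[
\alpha_1>\frac{-\gamma_1\alpha_2-\gamma_2}{\gamma_1+\alpha_2},
\]
which is the second condition of the lemma. So the second inequality of (4.12) holds.

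\textbf{Step 2: the first inequality.} This is the only step needing care, since the lemma does not impose it explicitly. Two facts combine:
\[
\alpha_2+\gamma_1>0 \quad\text{and}\quad \alpha_1>\frac{-\gamma_1\alpha_2-\gamma_2}{\gamma_1+\alpha_2}>0 .
\]
For the positivity of the lower bound: its denominator is positive by Step 1. Its numerator $-\gamma_1\alpha_2-\gamma_2$ is positive because $-\gamma_1>0$, $\alpha_2>-\gamma_1>0$ and $-\gamma_2>0$. Hence $\alpha_1>0$, and therefore
\[
\gamma_1+\alpha_1+\alpha_2=(\gamma_1+\alpha_2)+\alpha_1>0 .
\]

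\textbf{Step 3: nonemptiness.} For any $\alpha_2>-\gamma_1$, the admissible $\alpha_1$ form a half-line. Removing the single point $\alpha_1=\alpha_2$ leaves a nonempty set. Thus the system in the lemma does supply partial solutions of the first two inequalities of (4.12), as claimed.
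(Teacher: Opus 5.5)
Your proof is correct and follows the paper's argument. You deduce $\gamma_1+\alpha_2>0$ from $\alpha_2>-\gamma_1$, clear that positive denominator to obtain the second inequality of (4.12), and then use $\alpha_1>0$ to obtain the first; you justify $\alpha_1>0$ more explicitly than the paper by noting that the numerator $-\gamma_1\alpha_2-\gamma_2$ is positive because $\gamma_2<0$. The condition $\alpha_1\neq\alpha_2$ is really imposed so that the change of variables (4.13)/(4.15) in the next lemma is invertible (its determinant is $\alpha_1-\alpha_2$), but this is the same degeneracy you point to, so nothing is missing.
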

\begin{proof}
    Figure 3 illustrates the solutions  $\alpha_1$ and $\alpha_2$ in the system of inequalities above.
    We can prove that these solutions are also solutions of the first two
inequalities in system (4.12). Indeed, from $\alpha_{2} >  -
\gamma_{1}$ we can conclude that $\alpha_{2} > 0$ and $
\gamma_{1}+\alpha_{2}>0$. So from these two inequalities and
$\alpha_{1} > \frac{- \gamma_{1}\alpha_{2} -
\gamma_{2}}{\gamma_{1} + \alpha_{2}}$ we can conclude that also
$\alpha_{1} > 0$. By summing $\alpha_{1} > 0$ and $\alpha_{2} >  -
\gamma_{1}$ we get that $ \alpha_{1} + \alpha_{2} > -\gamma_{1} $
and consequently $\gamma_{1} + \alpha_{1} + \alpha_{2} > 0$. This
is the first inequality in (4.12). Using $ \gamma_{1}
+\alpha_{2}>0$ and moving all term to the left side of the
inequality $\alpha_{1}
> \frac{- \gamma_{1}\alpha_{2} - \gamma_{2}}{\gamma_{1} +
\alpha_{2}} $ we get the second inequality in (4.12):
$\alpha_{1}\gamma_{1} + \alpha_{2}\gamma_{1} +
\alpha_{1}\alpha_{2} + \gamma_{2} > 0 $. So the solution
$\alpha_1$ and $\alpha_2$ is also solution of the first two
inequalities in the system (4.12).

\end{proof}

\begin{figure}[H]
\centering
\includegraphics[ width=10cm]{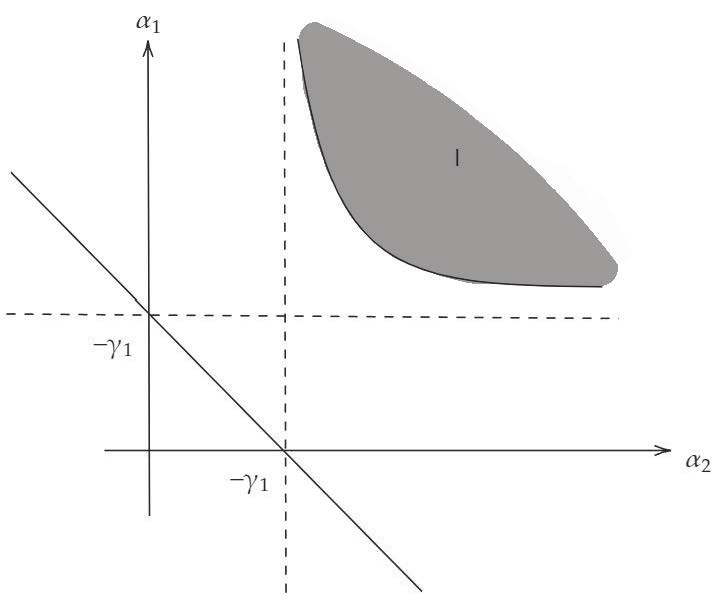}
\caption{Partial solution of the first two inequalities in (4.12)
}
\end{figure}

\begin{Lemma}
 In order to find the values of \(\beta_{1},\beta_{2}\) you use
the following transformation

\[\binom{\beta_{1}}{\beta_{2}} = \frac{1}{\alpha_{2} - \alpha_{1}}\begin{pmatrix}
 - \alpha_{1} & \alpha_{2} \\
1 & - 1 \\
\end{pmatrix}\binom{C - u_{1}}{D - v_{1}},\eqno(4.13)\]
where \(u_{1},v_{1}\) are chosen from the zone in which the
following inequalities are fulfilled

\[\left\{ \begin{matrix}
u_{1} > 0 \\
0 < v_{1} < - \frac{1}{A^{2}}u_{1}^{2} + \frac{B}{A}u_{1} \\
\end{matrix}, \right.\ \eqno(4.14) \]
where
\(A = \gamma_{1} + \alpha_{1} + \alpha_{2}\ ,\ B =
\alpha_{1}\gamma_{1} + \alpha_{2}\gamma_{1} + \alpha_{1}\alpha_{2}
+ \gamma_{2},\ C = \alpha_{1}\alpha_{2}\gamma_{1} +
\alpha_{1}\gamma_{2} + \alpha_{2}\gamma_{2}\ ,\ D =
\alpha_{1}\alpha_{2}\gamma_{2}.\)
\end{Lemma}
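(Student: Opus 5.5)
With $A,B,C,D$ as in the statement, I would first rewrite the four coefficients of the characteristic polynomial of (4.11). The $\lambda^3$ and $\lambda^2$ coefficients are $A$ and $B$. The $\lambda$ coefficient is $C-\beta_1-\beta_2$. The free term is $D-\alpha_2\beta_1-\alpha_1\beta_2$. I would then introduce new unknowns
\[
u_1 = C-\beta_1-\beta_2,\qquad v_1 = D-\alpha_2\beta_1-\alpha_1\beta_2 ,
\]
so that the characteristic polynomial becomes $\lambda^4+A\lambda^3+B\lambda^2+u_1\lambda+v_1$. In these variables the last three inequalities of (4.12) read
\[
u_1>0,\qquad v_1>0,\qquad ABu_1-A^2v_1-u_1^2>0 .
\]
The first two inequalities of (4.12) are just $A>0$ and $B>0$, which Lemma 4.1 guarantees.

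The second step is to solve the linear map $(\beta_1,\beta_2)\mapsto(C-u_1,\,D-v_1)$. This means solving
\[
\beta_1+\beta_2=C-u_1,\qquad \alpha_2\beta_1+\alpha_1\beta_2=D-v_1 .
\]
The determinant of this system is $\alpha_1-\alpha_2$, which is nonzero because Lemma 4.1 imposes $\alpha_1\neq\alpha_2$. Cramer's rule gives
\[
\beta_1=\frac{-\alpha_1(C-u_1)+(D-v_1)}{\alpha_2-\alpha_1},\qquad
\beta_2=\frac{\alpha_2(C-u_1)-(D-v_1)}{\alpha_2-\alpha_1}.
\]
I would check these against the matrix in (4.13), correcting the placement of entries if the printed matrix is permuted. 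The map is then a bijection between $(\beta_1,\beta_2)$ and $(u_1,v_1)$, so every admissible $(u_1,v_1)$ yields admissible $(\beta_1,\beta_2)$.

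The third step is to note that, since $A>0$, the inequality $ABu_1-A^2v_1-u_1^2>0$ is equivalent to
\[
v_1<-\frac{1}{A^2}u_1^2+\frac{B}{A}u_1 .
\]
Together with $u_1>0$ and $v_1>0$ this is exactly (4.14). The zone is nonempty: for any $0<u_1<AB$ the upper bound is positive, and $A,B>0$ by Lemma 4.1. Hence any $(u_1,v_1)$ in (4.14), mapped through (4.13), satisfies all of (4.12), and the Hurwitz criterion gives exponential stability.

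The only real obstacle I expect is bookkeeping. The exact form of the inverse matrix in (4.13) must match the Cramer solution, including sign and ordering. The Hurwitz fourth-order determinant must also be confirmed to reduce to $A B u_1 - A^2 v_1 - u_1^2$, which matches the last line of (4.12).
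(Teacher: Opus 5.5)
Your proposal is correct and is essentially the paper's proof: the paper makes the same substitution $u_1=C-\beta_1-\beta_2$, $v_1=D-\alpha_2\beta_1-\alpha_1\beta_2$ (its (4.15)), rewrites the last three conditions of (4.12) as $u_1>0$, $v_1>0$, $ABu_1-A^2v_1-u_1^2>0$, and then solves this for $v_1$, relying on $\alpha_1\neq\alpha_2$ for invertibility. Your Cramer's-rule formulas are the correct inverse, and your caution about (4.13) is justified: the printed matrix is the transpose of the true inverse $\frac{1}{\alpha_2-\alpha_1}\begin{pmatrix}-\alpha_1 & 1\\ \alpha_2 & -1\end{pmatrix}$, so your version is the one that should be used.
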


\begin{proof}
  Use the following linear reversible transformation to introduce new variables
\begin{equation*}
  \binom{u_{1}}{v_{1}} =\begin{pmatrix}
      -1& -1\\
      -\alpha_2&-\alpha_1
  \end{pmatrix}
  \binom{\beta_{1}}{\beta_{2}}+\binom{C}{D}\eqno(4.15)
\end{equation*}
After replacing the variables, the system of remaining
inequalities (4.12) can be written in the following form:
\begin{equation*}
 \begin{cases}
  u_1>0,v_1>0  \\

A\cdot B u_1-A^2v_1-u_1^2>0 \\
 \end{cases}
\end{equation*}
The last system of inequalities can be presented in form (4.14).
Figure 4 illustrates this solution.

\begin{figure}[H]
\centering
\includegraphics[ width=10cm]{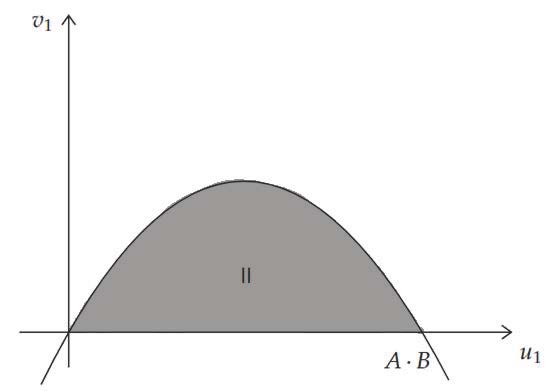}
\caption{Stability zone for parameters $u_1$and $v_1$. }
\end{figure}
\end{proof}
\begin{Example}
   Let \({\ \gamma_{1} = - 2,\ \ \gamma}_{2} = - 5,\) area of the parameter
\(\alpha_{2} > 2\) . Solving the system of equations and find the
parameters, we obtain \(\alpha_{1} \approx 3.55\ ,\ \alpha_{2} \approx 8.45\).
Using the second lemma we obtain
\(v_{1} < - \frac{1}{100}u_{1}^{2} + \frac{1}{10}u_{1}\) , Then
we\textquotesingle ll choose \(u_{1} = 2\) and then \(v_{1} < 0.16\) ,
then we choose \(v_{1} = 0.1\) and calculate the parameters
\(\beta_{1},\ \beta_{2}\) by using transformation (4.13). After
substituting all parameters, we obtain the following characteristic
equation
\(\lambda^{4} + 10\lambda^{3} + \lambda^{2} + 2\lambda + 0.1 = 0\) which
roots are
\(\lambda_{1} \approx - 9.91,\ \lambda_{2} \approx - 0.05,\ \lambda_{3} \approx - 0.14 - 0.44i,\ \lambda_{4} \approx - 0.14 + 0.44i\).
\end{Example}
Let\textquotesingle s look at the last case.\\
In this case we obtain the following equation:

\[x^{''}(t) + \gamma_{1}x^{'}(t) + \gamma_{2}x(t) = \beta_{1}\int_{0}^{t}e^{- \alpha_{1}(t - s)}x{'}(s)ds + \beta_{2}\int_{0}^{t}e^{- \alpha_{2}(t - s)}x{'}(s)ds.\]
Then using Theorem 3.1, we obtain the following system of
ordinary differential equations:

\[\left\{ \begin{matrix}

x^{''} + \gamma_{1}x^{'} + \gamma_{2}x = y_{1} + y_{2} \\
\  \\
y_{1}^{'} = - \alpha_{1}y_{1} + \beta_{1}x^{'} \\
\  \\
y_{2}^{'} = - \alpha_{2}y_{2} + \beta_{2}x^{'} \\
\  \\
\end{matrix} \right. .\eqno(4.15) \]

The characteristic equation of the system (4.13) has the following form:

\[ \lambda^{4} + \left( \gamma_{1} + \alpha_{1} + \alpha_{2} \right)\lambda^{3} + \left( \alpha_{1}\gamma_{1} + \alpha_{2}\gamma_{1} + \alpha_{1}\alpha_{2} + \gamma_{2} - \beta_{1} - \beta_{2} \right)\lambda^{2}+ \]\

\begin{equation*}
    + \left( \alpha_{1}\alpha_{2}\gamma_{1} + \alpha_{1}\gamma_{2} + \alpha_{2}\gamma_{2} - {\alpha_{1}\beta}_{2} - \alpha_{2}\beta_{1} \right)\lambda +
    \alpha_{1}\alpha_{2}\gamma_{1}=0.
\end{equation*}

The necessary condition of exponential stability of the solution
of system (4.15) is the condition of positivity of coefficients of
the characteristic equation, which in our case is not fulfilled,
since \(\alpha_{1}\alpha_{2}\gamma_{1} < 0\).

\section{Application to stabilization of drone's flight}

Let us use the method described in Section 3 for angular
stabilization of a drone by autopilot.

Let us define controlling term in integral form  in eq. (2.3):
$$
\begin{array}{c}
-n_B \delta_B (t)= {\frac{\beta _{1}
}{n_{22}}}\theta(t)+\beta_2\int_0^t e^{-\alpha_2 (t-s)}
\alpha(s)ds = \\
{\frac{\beta _{1} }{n_{22}}}(\theta(0)+ \int_0^t
{\theta^{\prime}(s)}ds)+\beta_2\int_0^t e^{-\alpha_2 (t-s)}
\alpha(s)ds =\\
{\frac{\beta _{1} }{n_{22}}}(\theta(0)+ \int_0^t
{(n_{22}\alpha(s)-f_{2}(s))}ds)+\beta_2\int_0^t e^{-\alpha_2
(t-s)}
\alpha(s)ds =\\
{\frac{\beta _{1} }{n_{22}}}\theta(0)+ \beta _{1}\int_0^t
{\alpha(s)}ds+ \int_0^t (-{{\frac{\beta _{1}
}{n_{22}}}f_{2}(s))}ds+\beta_2\int_0^t e^{-\alpha_2 (t-s)}
\alpha(s)ds =\\
\beta_1 \int_0^t x(s)ds+\beta_2 \int_0^t e^{-\alpha_2
(t-s)} x(s)ds+\int_0^t r_1 (s)ds+y_1 (0)%
\end{array}%
,\eqno(5.1)
$$
where $x=\alpha (t)$, $y_{1}={\frac{\beta _{1}\theta }{n_{22}}}$, $r_{1}(t)=-%
{\frac{\beta _{1}f_{2}(t)}{n_{22}}}$.

The equations (2.3) in this case can be written in the form:
$$
\left\{
\begin{array}{c}
y_1(t)=y_1 (0)+\beta_1 \int_0^t x(s)ds+ \int_0^t r_1 (s)ds \\
x^{\prime\prime}(t)+\gamma_{1}x ^{\prime}(t)+\gamma_{2}x(t)= \\
\beta_1 \int_0^t x(s)ds+\beta_2 \int_0^t e^{-\alpha_2 (t-s)}
x(s)ds+(r_2(t)+\int_0^t r_1 (s)ds+y_1 (0))%
\end{array}%
\right. ,\eqno(5.2)
$$
where $r_{2}(t)=f_{3}(t)+f_{2}^{\prime }(t)+n_{33}f_{2}(t)$.

The full system (5.2) is equivalent to the following system of
ordinary differential equations by introducing new variable
$y_{2}(t)$ with initial condition $y_{2}(0)=0$ (see Theorem 3.1):
$$
\left\{
\begin{array}{c}
y_2^{\prime}(t)=\beta_2 x(t) -\alpha_2 y_2(t) \\
y_1^{\prime}(t)=\beta_1 x(t) + r_1(t) \\
x^{\prime\prime}(t)+\gamma_{1}x ^{\prime}(t)+\gamma_{2}x(t)=
y_1(t) +
y_2(t)+r_2(t)%
\end{array}%
\right. ,\eqno(5.3)
$$
where $\delta _{B}(t)=-{\frac{y_{1}+y_{2}}{n_{B}}}$, $\gamma
_{1}>0$ is a known positive constant, $\gamma _{2}$ is a known
constant with arbitrary sign, $\beta _{1}$, $\beta _{2}$, $\alpha
_{2}$ are arbitrary constants.

Thus, we have to study stability of (5.3) with respect to $r_{1}(t)$, $%
r_{2}(t)$ to be sure that "small" errors $r_{1}(t)$, $r_{2}(t)$ do
not imply "big" perturbations in the solution. According to the
classical Bohl-Perron theorem \cite{AMR,AP} for this case, the
homogeneous equations:
$$
\left\{
\begin{array}{c}
y_2^{\prime}(t)=\beta_2 x(t) -\alpha_2 y_2(t) \\
y_1^{\prime}(t)=\beta_1 x(t) \\
x^{\prime\prime}(t)+\gamma_{1}x ^{\prime}(t)+\gamma_{2}x(t)= y_1(t) + y_2(t)%
\end{array}%
\right. .\eqno(5.4)
$$
has to be exponentially stable.

The Hurwitz criteria, gives necessary and sufficient conditions of
the exponential stability of (5.4):

$$
\left\{
\begin{array}{c}
\gamma_1+\alpha_2>0 \\
\gamma_1 \alpha_2+\gamma_2>0 \\
\alpha_2 \gamma_2-(\beta_1+\beta_2 )>0 \\
-\alpha_2 \beta_1>0 \\
(\gamma_1+\alpha_2 ) (\gamma_1 \alpha_2+\gamma_2 )(\alpha_2
\gamma_2-(\beta_1+\beta_2 ))-(\gamma_1+\alpha_2 )^2 (-\alpha_2 \beta_1 )- \\
-(\alpha_2 \gamma_2-(\beta_1+\beta_2 ))^2>0%
\end{array}%
\right. .\eqno(5.5)
$$

This system has solution if and only if the following system has
solution for $\gamma_1>0$:

$$
\left\{
\begin{array}{c}
\alpha_2>max(-\gamma_1,-{\frac{\gamma_2 }{\gamma_1}});\alpha_2\neq 0 \\
\alpha_2 \gamma_2>B>-\gamma_1(\gamma_1 \alpha_2+\gamma_2+\alpha_2^2) \\
{\frac{(\gamma_1(\gamma_1
\alpha_2+\gamma_2+\alpha_2^2)+B)(\alpha_2\gamma_2-B) }{(\gamma_1+\alpha_2)^2}%
}>A>0 \\
\beta_1=-{\frac{A }{\alpha_2}} \\
\beta_2=B-\beta_1%
\end{array}%
\right. ,\eqno(5.6)
$$
where $A$ and $B$ are corresponding constants.

 If $\alpha_2>max(-\gamma_1,-{\frac{\gamma_2 }{\gamma_1}})$,
then $\alpha_2 \gamma_2>-\gamma_1(\gamma_1
\alpha_2+\gamma_2+\alpha_2^2)$.

If $\alpha_2 \gamma_2>B>-\gamma_1(\gamma_1
\alpha_2+\gamma_2+\alpha_2^2)$, then $(\gamma_1(\gamma_1
\alpha_2+\gamma_2+\alpha_2^2)+B)(\alpha_2\gamma_2-B)>0$.

As a result, the system (5.5) always has a non-empty set of
solutions described in (5.6).

{\bf Example 5.1.}

Let us choose: $M=1.6$, $H=12 km$, $\tau_a=4 s$ $n_0=0.68$; $n_{33}=2.42$; $%
n_{22}=2.4$; $n_{32}=-36$; $n_B=46$. So, we can get: $%
\gamma_1=n_0+n_{33}+n_{22}=5.5$
$\gamma_2=n_{32}+n_{33}n_{22}=-30.192$. From (5.6) we can get:

$\alpha _{2}>max(-\gamma _{1}=-5.5,-{\frac{\gamma _{2}}{\gamma
_{1}}}\approx 5.49)\approx 5.49; \alpha _{2}\neq 0,$ then we can
choose $\alpha _{2}=6$.

$\alpha _{2}\gamma _{2}=-181,152>B>-\gamma _{1}(\gamma _{1}\alpha
_{2}+\gamma _{2}+\alpha _{2}^{2})=-213.444,$ then we can choose
$B=-200$.

${\frac{(\gamma _{1}(\gamma _{1}\alpha _{2}+\gamma _{2}+\alpha
_{2}^{2})+B)(\alpha _{2}\gamma _{2}-B)}{(\gamma _{1}+\alpha _{2})^{2}}}={%
\frac{(13.444\ast 18.848)}{132.25}}=1.92>A>0,$ then we can choose $A=1$, $%
\beta _{1}=-{\frac{A}{\alpha _{2}}}\approx -0.167,$ $\beta
_{2}=B-\beta _{1}=-199,833.$

From (5.1) we get:

\[
\begin{array}{c}
\delta_B (t)=-{\frac{\beta_1 \int_0^t x(s)ds+\beta_2 \int_0^t
e^{-\alpha_2 (t-s)} x(s)ds +y_{1}(0)}{n_B}}={\frac{0.167 \int_0^t
x(s)ds+199,833 \int_0^t e^{-6
(t-s)} x(s)ds +y_{1}(0)}{46}}%
\end{array}%
\]

$\theta ={\frac{n_{22}y_{1}}{\beta _{1}}}=-1.43y_{1},$ $\alpha =x$, where $x$%
, $y_{1}$ are solutions of (5.4).

We verified exponential stability of some solution for (5.4) by numerical simulation using  Matlab programm dde23 for the found parameters.
The result can be seen on Figure 5.

\begin{figure}[H]
\centering
\includegraphics[ width=10cm]{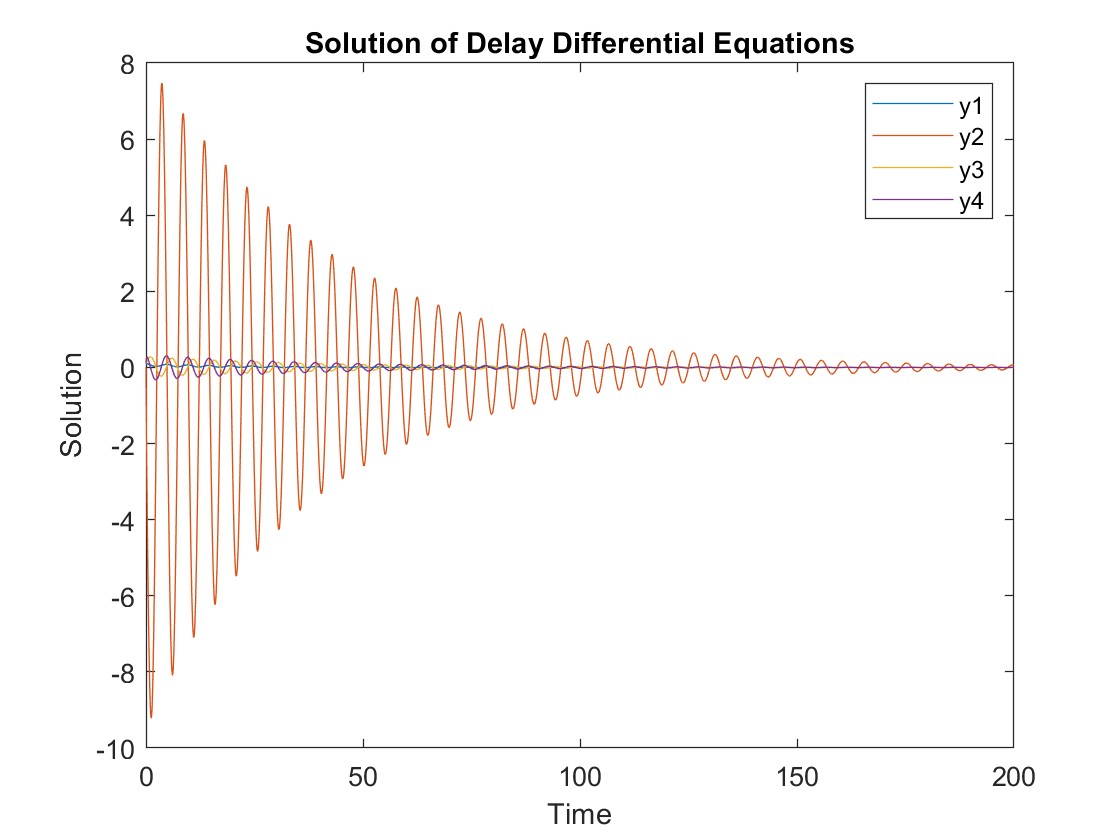}
\caption{Verification of exponential stability of the system (5.4) for the found parameters. Here $y1=y_{1}$,$y2=y_{2}$,$y3=x$,$y4={\frac{dx}{dt}}$}
\end{figure}

For equations (2.3) we found control parameters to obtain
exponentially stable trajectory. We can obtain solution (2.3)
through solutions of (5.3).

\bigskip

\section*{Disclosure statement}
No potential conflict of interest was reported by the authors.


\begin{thebibliography}{99}

\bibitem{Abahre1} A. Abahre, D. Elbhar, O. Kupervasser, H. Kutomanov, G. Rubin,
G. Shimoni, R. Yavich, Fusion of inertial navigation system with
10 axes and computer vision for UAV navigation, Functional
Differential Equations, V. 26, No 3-4, 2020, p. 141-146.

\bibitem{Abdillah} Moussa Abdillah, El Mehdi Mellouli , Touria Haidi, A new intelligent controller
based on integral sliding mode control and extended state observer for nonlinear
MIMO drone quadrotor, International Journal of Intelligent Networks, Volume 5,
2024, 49-62.

\bibitem{AgarwalLup} R.P. Agarwal, V. Lupulescu, D. O'Regan and A. Younus,
Floquet theory for a Volterra integro-dynamic system, Applicable Analysis,
2014, http://dx.doi.org/10.1080/00036811.2013.867019

\bibitem{Aguiar2008} M.Aguiar, B.Kooi, N.Stollenwerk, Epidemiology of dengue
fever: A model with temporary cross immunity and possibly
secondary infection shows bifurcations and chaotic behaviors in
wide parameter region, Math. Model. Nat. Phenom., 3 (2008), 48-70.

\bibitem{Avasker1} Avasker, S., Domoshnitsky, A., Kogan, M., Kupervasser,
O., Kutomanov, H., Rofsov, Y., Volinsky, I., Yavich, R. (2020). A method for
stabilization of drone flight controlled by autopilot with time delay. SN
Applied Sciences, 2(2), Article 225.
https://doi.org/10.1007/s42452-020-1962-6.

\bibitem{AMR} N.V.Azbelev, V.P.Maksimov and L.F.Rakhmatullina. {\it %
Introduction to theory of functional-differential equations,} Nauka, Moscow,
1991.

\bibitem{AP} Azbelev, N.\thinspace V., and Simonov, P.\thinspace M. :
Stability of Differential Equations with Aftereffect. {\em Stability and
Control: Theory, Methods and Applications}, {\bf 20}. Taylor \& Francis,
London, 2003.

\bibitem{Becker} Becker LC, Burton TA, Krisztin T. Floquet theory for a
Volterra equation. J. London Math. Soc. 1988; 37--2:141--147.

\bibitem{Belbas} S.A.Belbas, Floquet theory for integral and
integro-differential equations. Classical Analysis and ODEs (math.CA)
arXiv:1205.5302v1 [math.CA].

\bibitem{Bellman} R.Bellman,Methods of nonlinear analysis, Academic Press,
New York and London, 1973.

\bibitem{BDK} L.Berezansky, A.Domoshnitsky, R.Koplatadze, Oscillation,
Stability and Asymptotic Properties for second and Higher Order Functional
Differential Equations, CRC Press, Taylor \& Francis Group, 2020.

\bibitem{Bodner1} Bodner V.A, Kozlov M.S. Stabilization of aerial vehicles
and autopilots. OboronGIz, Moscow , 1961 (in Russian).

\bibitem{Burton} T.A.Burton. Volterra Integral and Differential Equations,
N.Y., Academic Press, 1983.

\bibitem{Cacace} Cacace, F., Germani, A., \& Manes, C. Nonlinear systems
with multiple timevarying measurement delays. SIAM Journal on
Control and Optimization, 52(3), 2014, 1862--1885.

\bibitem{Chen} D.Chen, Z.Xu, Global dynamics of a delayed diffusive
two-strain disease model, Diff. Eq. Appl., 8 (2016), 99-122.

\bibitem{cord2} C.Corduneanu. Integral Equations and Stability Feedback
Systems.-New York, London: Academic Press, 1973.

\bibitem{Corduneanu} C.Corduneanu. Integral Equations and Applications,
Campbridge University Press, Campbridge,1991.

\bibitem{DomGolInfinite} A.Domoshnitsky and Ya.Goltser, Positivity of
solutions to boundary value problems for infinite functional differential
systems, Mathematical and Computer Modelling. Vol. 45, No. 11-12, June 2007,
1395-1404.

\bibitem{DomKup2017} A.Domoshnitsky, O.Kupervasser, \textquotedblleft
Vision-based drone navigation using ground relief (DTM)\textquotedblright ,
Hangzhou Avisi Electronics Co., Ltd. -- China private company, 2017-2018.

\bibitem{DomKup2018} A.Domoshnitsky, O.Kupervasser, \textquotedblleft
Vision-based navigation of ground robots from upper
position\textquotedblright , KAMIN -- Israel Innovation Authority, 2018-2020.

\bibitem{DomKup2020} A.Domoshnitsky, O.Kupervasser, \textquotedblleft
Vision-based drone navigator\textquotedblright , NOFAR -- Israel Innovation
Authority, SWYFT Aeronautical Technology, Ltd.-- private company, 2020-2023.

\bibitem{Driessche} P.Van den Driessche. Some epidemiological models with
delays. In:\ Differential Equations and Applications to biologt and to
Industry, World Sci. (1996), 507-520.

\bibitem{Drozd1} A.D.Drozdov and V.B.Kolmanovskii. Stability in
Viscoelasticity, North Holand, Amsterdam, 1994.

\bibitem{drozd2} A.D.Drozdov. Stability of integro-differential equation
with periodic operator coefficients, Quart.J. Mech. Appl. Math., 49,235-260
(1996).

\bibitem{drozd} A.D.Drozdov. Explicit stability conditions for
integro-differential equations with periodic coefficients, Mathematical
Methods in the Applied Sciences, vol.21, 565-588 (1998).

\bibitem{Gil} A.D.Drozdov and M.I.Gil. Stability of linear
integro-differential equation with periodic coefficients, Quart. Appl.
Math., 54,609-624 (1996).

\bibitem{Guan} J.Guan, L.Wu, M.Chen, X.Dong, H.Tang, Z.Chen, The stability
and Hopf bifurcation of the dengue fever model with time delay, It. J. Pure
App. Math, 37 (1973), 139--156.

\bibitem{Hara} T. Hara and R. Miyazaki, Equivalent condition for stability
of Volterra integro--differential equations, J. Math. Anal. Appl.,
174(1993), 298--316.

\bibitem{Huang} C.Huang, J.Cao, F. Wen, X,Yang at al. Stability analysis of
SIR model with distributed delay on complex networks, PloS One, 11 (2016),
e0158813. https://doi.org/10.1371/ jpurnal.pone.0158813.

\bibitem{Fabrizio} M.Fabrizio and A.Morro. Mathematical Problems in Linear
Viscoelasticity, SIAM Stud.Appl. Math.,Philadelphia,1992.

\bibitem{Ferry} J.D.Ferry.Viscoelastic properties of polymers. John Wiley
and Sons Inc.,1970.

\bibitem{Filatov} A.N.Filatov. The Averaging Method in Differential and
Integro-Differential Equations, Tashkent, FAN, USSR, 1971 (in Russian).

\bibitem{Fridman1} Fridman E. A refined input delay approach to sampled-data
control, Automatica, 46, (2014), 421--427.

\bibitem{Fridman2} Fridman E. Introduction to Time-Delay Systems: Analysis
and Control, Birkhauser, 2014.

\bibitem{Golden} J.M.Golden and G.A.C.Graham. Boundary value problems in
linear viscoelasticity. Springer-Verlag,1988.

\bibitem{Gripenberg} G.Gripenberg, S.-O.Londen and O.Staffans. Volterra
Integral and Functional Equations, Campbridge University Press, Campbridge,
1990.

\bibitem{Gurtin} M.E.Gurtin and Pipkin, A General Theory of Heat conduction
with FiniteWave Speeds, Arch. Rat. Mech. Anal. 31 (1968) 113-126.

\bibitem{Indicator1} Airspeed Indicator,
https://www.cfinotebook.net/notebook/avionics-and-instruments/airspeed-indicator

\bibitem{Kolmogorov} A.N.Kolmogorov and S.V.Fomin, Elements of function
theory and functional analysis, Nauka, Moscow, 1972 (in Russian).

\bibitem{Kotaro1} Kotaro Haneda, Kenei Matsudaira, Ryusuke Noda, Toshiyuki
Nakata, Satoshi Suzuki, Hao Liu and Hidetoshi Takahashi, "Compact
Sphere-Shaped Airflow Vector Sensor Based on MEMS Differential
Pressure Sensors", Sensors 2022, 22(3), 1087
https://www.mdpi.com/1424-8220/22/3/1087

\bibitem{Lena1} E.Litsyn, On the formula for general solution of infinite
system of functional differential equations, Functional Differential
Equations, v.2, 1994, 111-121.

\bibitem{Lena2} E.Litsyn, On the general theory of linear functional
differential equations, Differentsial'nye Uravnenia, 1988, vol.24, pp.
977-986.

\bibitem{Madgwick1} Madgwick filter discription
https://habr.com/ru/articles/255661/ (in Russian)

\bibitem{Madgwick2} S. Madgwick, An Efficient Orientation Filter for
Inertial and Inertial/Magnetic Sensor Arrays; Technical Report;
Report x-io and University of Bristol: Bristol, UK, 30 April 2010.
https://www.samba.org/tridge/UAV/madgwick\_internal\_report.pdf


\bibitem{Novick} A.Novick-Cohen. Conserved Phase-Field Equations with
Memory, in \textquotedblleft Curvature Flows and Related
Topics,\textquotedblright\ A.Damlamian, J.Spruck and A.Visintin, eds., pp.
179-197, GAKUTO Internat.Ser. Math. Sci. Appl., 5. Gakkotosho, Tokyo,1995.

\bibitem{Pers1} K.P.Persidskii, On stability of solutions to a countable
systems of differential equations, Izv. AS Kaz. SSR, Math.and Mech.,2, 1948,
2-35 (in Russian).

\bibitem{Pers2} K.P.Persidskii, Infinite systems of differential equations,
Izv. AS Kaz. SSR, Math.and Mech. 4(8), 1956, 3-11 (in Russian).

\bibitem{Pers3} K.P.Persidskii, Countable systems of differential equations
and stability of their solutions, Izv. AS Kaz. SSR, Math.and Mech. 7(11),
1959, 52-71 (in Russian).

\bibitem{PSM} A. Ponosov, A. Shindiapin and J. Miguel, The W--transform
links delay and ordinary differential equations, Functional Differential
Equations, 9(2002), 437--470.

\bibitem{Pruss} J. Pruss, {\it Evolutionary Integral Equations and
Applications}, Monogr. Math., 87, Birkhauser, Basel, 1993.

\bibitem{Reich} N.G.Reich, S. Shrestha, A.A.King, at al., Ineraction betwee
serotypes of dengue hight epidemiological impact of cross-immunity, J.R.Soc.
Interface, 10 (2013), 4-14. https://doi.org/10.1098/rsif.2013.0414.

\bibitem{Renardy} M.Renardy, W.J.Hrusa and J.A.Nohel. Mathematical Problems
in Viscoelasticity, Longman, New York, 1987.

\bibitem{Rotstein} H.G.Rotstein, S.Brandon, A.Novick-Cohen and
A.Nepomnyashchy. Phase field equations with memory: the hyperbolic case.
SIAM J. Appl. Math., vol. 62, N 1, pp.264-282.

\bibitem{Shilo} G.E. Shilov, {\it Mathematical Analysis, Specialized Course}%
, FizMatGiz, 1965 (in Russian).

\bibitem{Song} Y. Song, L. He, D. Zhang, J. Qian, and J. Fu, “Neuroadaptive fault tolerant control
of quadrotor UAVs: A more affordable solution,” IEEE Trans. Neural Netw. Learn.
Syst., vol. 30, no. 7, pp. 1975–1983, Jul. 2019.

\bibitem{Triputra} F. R. Triputra, B. R. Trilaksono, T. Adiono, R. A. Sasongko, M. Dahsyat, Nonlinear
Dynamic Modeling of a Fixed-Wing Unmanned Aerial Vehicle: a Case Study of
Wulung. Journal of Mechatronics, Electrical Power, and Vehicular Technology.
https://doi.org/10.14203/j.mev.2015.v6.19-30

\bibitem{Valeev} K.G.Valeev and O.A.Zhautykov, Infinite Systems of
Differential Equations, Nauka, Alma-Ata, 1974 (in Russian).

\bibitem{Vanessa} Vanessa Steindorf, Sergio Oliva and Jianhong Wu, Cross
immunity protection and antibody-dependent enhancement in a distributed
delay dynamic model, Mathematical Biosciences and Engineering, MBE 19(3):
2950-2984, DOI 10.3934/mbe.2022136

\bibitem{Xu2017} J.Xu, Y.Geng, Y.Zhou, Global attractivity of a multi-group
model with distrubuted delay and vaccination, Math. Meth. Appl. Sci,, 40
(2017), 1475-1486. https://doi.org/10.1002/mma.4068

\bibitem{Yang} S. Yang and B. Xian, “Energy-based nonlinear adaptive control design for the quadrotor
UAV system with a suspended payload,” IEEE Trans. Ind. Electron., vol. 67, no.
3, pp. 2054–2064, Mar. 2020.

\bibitem{Yajing} Yajing Yu , Jian Guo, Choon Ki Ahn and Zhengrong Xiang, Neural Adaptive Distributed
Formation Control of Nonlinear Multi-UAVs With Unmodeled Dynamics,
IEEE TRANSACTIONS ON NEURAL NETWORKS AND LEARNING SYSTEMS,
VOL. 34, NO. 11, NOVEMBER 2023 9555.

\end{thebibliography}
\end{document}